\documentclass{article}
\usepackage{iclr2027_conference,times}
\usepackage[T1]{fontenc}
\usepackage[utf8]{inputenc}
\usepackage{amsmath,amssymb,amsthm,mathtools}
\usepackage{graphicx,booktabs,array,tabularx,multirow,longtable}
\usepackage[table]{xcolor}
\usepackage{microtype}
\usepackage{pifont}
\usepackage{soul}
\usepackage{booktabs}
\usepackage{makecell}
\newcommand{\shline}{\Xhline{0.8pt}}
\usepackage{wrapfig,caption,subcaption,placeins}
\usepackage{float}
\newfloat{algorithm}{tbp}{loa}
\floatname{algorithm}{Algorithm}
\floatstyle{ruled}\restylefloat{algorithm}
\usepackage{enumitem}
\usepackage{url}
\usepackage[colorlinks=true,linkcolor=LinkBlue,citecolor=LinkBlue,urlcolor=LinkBlue]{hyperref}
\definecolor{LinkBlue}{HTML}{254B73}
\definecolor{GainGreen}{HTML}{21804C}
\definecolor{LossRed}{HTML}{A33F3F}
\definecolor{TableTint}{HTML}{F0F5F8}
\definecolor{SoftGray}{HTML}{5A626A}
\renewcommand{\arraystretch}{1.13}
\setlist[itemize]{leftmargin=1.4em,itemsep=1pt,topsep=2pt,parsep=0pt}
\setlist[enumerate]{leftmargin=1.6em,itemsep=1pt,topsep=2pt,parsep=0pt}
\newcolumntype{L}[1]{>{\raggedright\arraybackslash}p{#1}}
\newcolumntype{Y}{>{\raggedright\arraybackslash}X}
\newcommand{\method}{\textsc{GlitchPatch}}

\newtheorem{proposition}{Proposition}
\title{\textls[-6]{GlitchPatch: Repairing Glitch Tokens in Frozen Language Models via Local Retokenization}}

\iclrfinalcopy

\newcommand{\authsep}{\hspace{0.4em}}

\author{
\hspace*{-\tabcolsep}{\normalsize
Kunsheng Tang$^{1,*}$\authsep
Peigui Qi$^{1,*}$\authsep
Yide Song$^{2}$\authsep
Peijun Huang$^{3}$\authsep
Weiming Zhang$^{1}$\authsep
Nenghai Yu$^{1}$} \\[8pt]
\hspace*{-\tabcolsep}{\normalsize
$^{1}$University of Science and Technology of China \quad
$^{2}$University of Washington} \\[2pt]
\hspace*{-\tabcolsep}{\normalsize
$^{3}$Wuhan University \quad
$^{*}$Equal contribution}
}

\hypersetup{
  pdftitle={GlitchPatch: Repairing Glitch Tokens in Frozen Language Models via Local Retokenization},
  pdfauthor={Kunsheng Tang, Peigui Qi, Yide Song, Peijun Huang, Weiming Zhang, Nenghai Yu},
  pdfsubject={arXiv preprint}
}
\makeatletter
\let\orig@maketitle\@maketitle
\renewcommand{\@maketitle}{\vspace*{-0.3in}\orig@maketitle}
\makeatother
\begin{document}
\maketitle
\lhead{Preprint}
\renewcommand{\headrulewidth}{1pt}
\begin{abstract}
Glitch tokens are anomalous vocabulary entries that can cause large
language models (LLMs) to produce outputs inconsistent with their
inputs. Existing repair methods require access to model internals,
making them impractical for frozen checkpoints. We investigate whether
glitch tokens can be repaired outside the model by optimizing
the input tokenization. An empirical study on BPE merge-rule deletion
reveals that (1)~deleting a glitch token's merge rule can fix a
substantial fraction of failures, yet disrupting normal tokens sharing
intermediate merge nodes causes the overall glitch rate to rise, and
(2)~different decomposition granularities yield non-monotonic fix rates
while collateral damage on normal tokens grows monotonically. Motivated
by these findings, we propose \textbf{\method{}}, a repair framework for frozen
language models based on local retokenization, consisting of two
stages: the \textbf{offline stage} uses Behavioral Path Optimization
(BPO) to find the behaviorally optimal replacement token sequence for
each glitch token and compiles validated replacements into a rule table;
the \textbf{online stage} substitutes only the IDs of matched glitch
tokens in the canonical token sequence, with no modification to model
parameters or internal states. Experiments on ten models spanning six
tokenizer families show that \method{} achieves an 85.10\% mean fix
rate, outperforming the strongest baseline by 14.37 percentage points,
and reduces the average glitch rate from 14.88\% to 2.27\%. 
\method{} achieves a 0.00\% RR in full-vocabulary evaluation and leaves
rule-unmatched inputs unchanged by design.
We further evaluate the practical impact of
repair from the perspectives of time cost, language understanding, and capability, supporting its deployment feasibility. We hope this
work provides a practical option for improving tokenizer reliability. 
\end{abstract}

\section{Introduction}
\label{sec:introduction}

The vocabularies of large language models (LLMs) contain anomalous
entries known as \emph{glitch tokens}, which can induce outputs
inconsistent with their inputs~\citep{land2024fishing}. Some glitch
tokens closely resemble normal tokens, making them easy to trigger
inadvertently and leading to unexpected behavior that degrades user experience (Figure~\ref{fig:introduction}). When such strings appear
in product names, database fields, or code identifiers, errors can
propagate to downstream tasks. Prior work has linked glitch
tokens to unsafe generation and degraded instruction
following~\citep{land2024fishing,li2024glitch,tang2026bad}, making their repair a
necessary step toward improving model reliability.

\begin{figure}[!b]
\centering
\includegraphics[width=\linewidth]{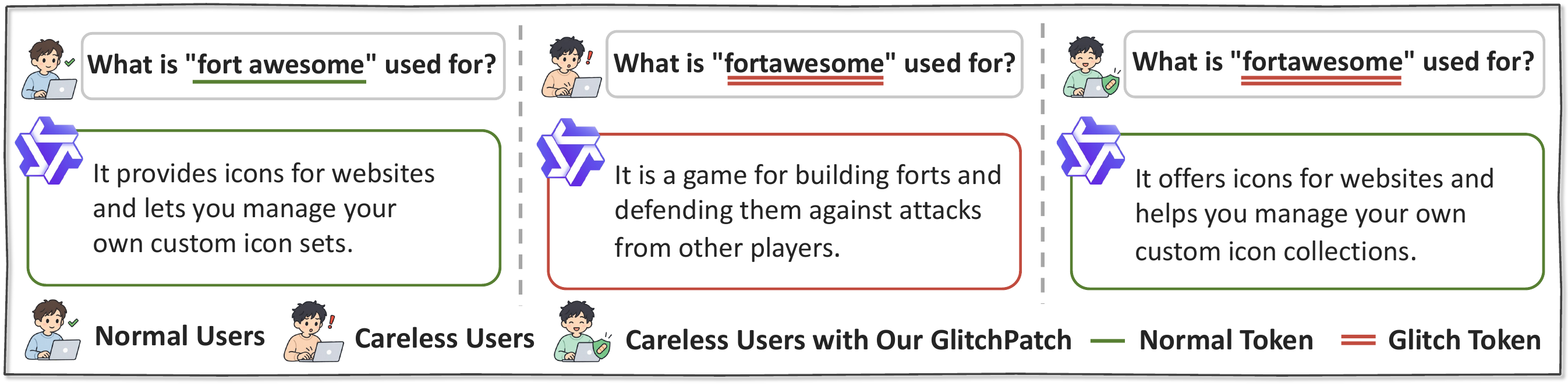}
\caption{\textbf{A glitch-token example and our \method{} repair.} The strings ``fortawesome'' (glitch token) and ``fort awesome'' (normal tokens) differ by one space. The model describes an icon library for the latter but a fort-building game for the former. \method{} restores the relevant answer without changing the input text.}
\label{fig:introduction}
\vspace{-2em}
\end{figure}

Existing work repairs glitch tokens from two directions.
Activation-based methods, represented by
GlitchProber~\citep{zhang2024glitchprober}, access intermediate-layer activations at inference time and correct target neuron states. Parameter-based methods modify the model's parameters:
GlitchEdit~\citep{tang2026bad} edits the embedding weights of target tokens, and GlitchCleaner~\citep{fan2026glitchcleaner} trains and attaches a gated LoRA branch. These methods are effective in their respective settings, but they must happen \textbf{inside} the model. Whether inserting activation hooks, rewriting embedding weights, or loading adapter parameters, all require access to model internals or alter the model's parameters. In practice, however, most providers deploy frozen checkpoints
integrated into production pipelines with verified kernels, version control, and regression testing. Any internal modification can invalidate these pipelines and introduce regression risks. When providers discover glitch tokens, they typically need a fast, reversible fix that leaves the model itself untouched rather than a costly internal refactor. \textbf{To date, no
work has explored whether glitch tokens can be repaired from outside the model.}

Tokenization provides a natural entry point for external repair. An LLM
does not process raw text directly; its tokenizer encodes the input into
a sequence of vocabulary tokens, each mapped to a learned embedding (Figure~\ref{fig:bpe}). Tokenizers typically use byte-pair encoding (BPE), which builds tokens
by progressively merging shorter units according to learned merge rules. If the final token is a glitch token, an
intuitive fix is to block its last merge so that the model receives the
normal sub-tokens instead. The observation that frozen models can
tolerate non-canonical tokenizations supports this
direction~\citep{zheng2025broken}. However, this intuition does not
directly yield reliable repair. We conduct an empirical study on deleting BPE glitch token merge rules and uncover two key
findings (Figure~\ref{fig:deletion}). First, deleting a glitch token's
merge rule does fix a portion of failures but also disrupts normal longer tokens that share the same intermediate merge node (Figure~\ref{fig:shared}a), causing the overall glitch rate to increase (\textbf{Finding~1}). Second,
decomposing a glitch token to different granularities produces replacement paths whose
fix rates vary non-monotonically across models
(Figure~\ref{fig:shared}b), while collateral damage to normal tokens
grows monotonically with depth (\textbf{Finding~2}). These two findings expose two open problems for external repair:  \textbf{how to find the optimal replacement path for each glitch token, and how to eliminate collateral effects on normal tokens.}

Motivated by these findings, we propose \textbf{\method{}}, an external repair
framework for frozen language models based on Local Retokenization
(Figure~\ref{fig:framework}). \method{} leaves the tokenizer unchanged
and replaces only the IDs of matched glitch tokens in the canonical
token sequence. The framework has two stages: \textbf{offline}, Behavioral Path
Optimization (BPO) searches for the behaviorally optimal equivalent
token sequence for each known glitch token and compiles validated
replacements into a rule table through full-context evaluation and
independent calibration; \textbf{online}, only rule matching and ID substitution
are performed, without reading or modifying any model parameters or
internal states. Experiments on ten models spanning six tokenizer families show that:
\ding{182}~for repair effectiveness, \method{} achieves an 85.10\% mean fix
rate, outperforming the strongest baseline by 14.37 percentage points
and reducing the mean glitch rate from 14.88\% to 2.27\%;
\ding{183}~for impact on normal tokens, the regression rate is 0.00\%
on the full-vocabulary evaluation, while requests that trigger no
applicable rule are preserved by construction;
\ding{184}~for practicality, the average token-level loss drops by 13.5\% on
natural texts containing glitch tokens, with negligible online latency
overhead.
These results demonstrate that by optimizing the tokenization, an external method can not only repair glitch tokens
effectively but outperform existing internal approaches.

In summary, this paper makes three contributions: \ding{182}~we present
the first exploration of repairing glitch tokens from outside the model, identifying two limitations of merge-rule deletion that motivate our approach (Section~\ref{sec:empirical}); \ding{183}~we propose \method{}, a framework that combines offline BPO
path optimization with online local retokenization, repairing glitch
tokens with minimal impact on normal tokens and overhead (Section~\ref{sec:method}); \ding{184}~extensive experiments show that \method{} achieves a mean fix rate of 85.10\%, surpassing all existing methods, with validated practical deployment feasibility in time cost,
language understanding, and general capability  (Section~\ref{sec:setup} \& \ref{sec:results}).
\section{Background and Related Work}
\label{sec:background}

This section introduces tokenization in LLMs and reviews glitch-token
detection and repair, clarifying the gap between existing methods and
frozen deployment needs.

\subsection{Tokenizer and Tokenization}

\begin{wrapfigure}{r}{0.285\textwidth}
  \centering
  \includegraphics[width=\linewidth]{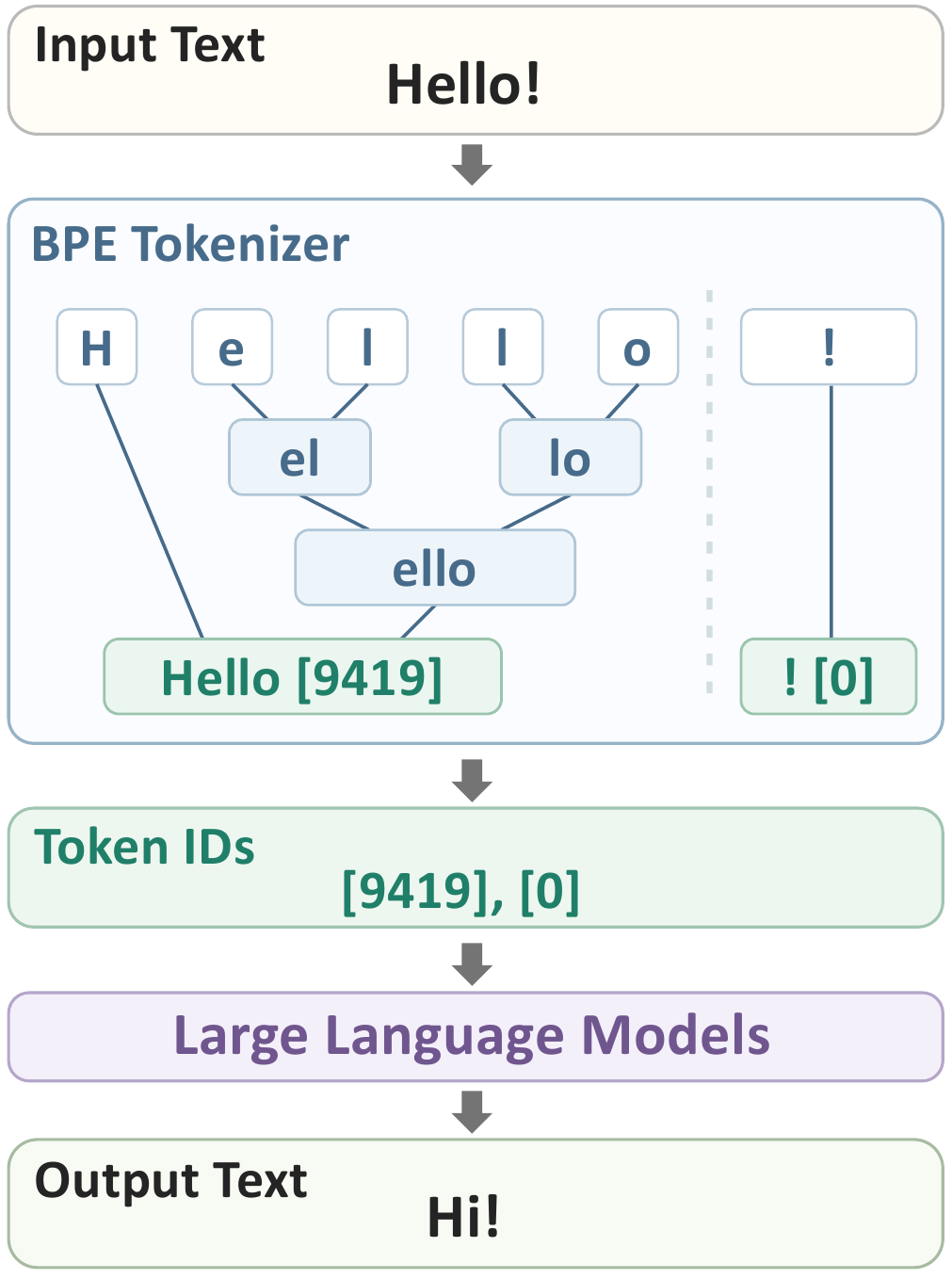}
  \caption{\textbf{BPE tokenization of ``Hello!''  with Qwen3.8-27B.}}
  \label{fig:bpe}
  \vspace{-15pt}
\end{wrapfigure}

LLMs do not process raw text directly. The input is
first split by a tokenizer into vocabulary tokens, each mapped to a
vector via an embedding lookup and then processed by the Transformer. A
token may be a word fragment or part of a byte
sequence, so different segmentations of the same text produce different
model inputs.

Most modern tokenizers adopt Byte Pair Encoding (BPE). BPE learns an
ordered set of merge rules from the training corpus; at encoding time,
it starts from the smallest units and repeatedly merges adjacent pairs
by rule priority until no further merge applies
(Figure~\ref{fig:bpe}, Qwen3.8-27B\citep{qwen2026}). For a given text and tokenizer, this
deterministic result is the canonical tokenization, denoted
$T(x)$. However, blocking a merge step or using other vocabulary
entries covering the same byte span can produce ID sequences that decode
to the same bytes; we call these equivalent tokenizations. Prior work has shown
that some frozen models tolerate or even benefit from non-canonical
segmentations~\citep{zheng2025broken,ghosh2026robust}, and techniques
such as BPE-Dropout enhance this robustness
by introducing segmentation variation during
training~\citep{kudo2018subword,provilkov2020bpe}. These findings
indicate that tokenization is a manipulable variable, providing the
basis for repairing glitch tokens at the tokenization level.

\subsection{Glitch-Token Detection and Repair in LLMs}

Glitch tokens are vocabulary entries that can induce anomalous behavior
under certain inputs. When passed to downstream components such as
retrieval or code execution, a single string-level error can cascade
into workflow-wide failures (Figure~\ref{fig:introduction}). Prior work has linked glitch tokens
to unsafe generation and
behavior~\citep{li2024glitch,tang2026bad}, underscoring the necessity
of repair.

\textbf{Detection.} Existing work identifies anomalous tokens through behavioral tasks.
Early studies~\citep{land2024fishing,wu2024glitchminer} use repetition
tasks to find tokens a model cannot reproduce correctly.
\citet{li2024glitch} add spelling and length tasks, extending coverage
to three dimensions. GlitchQuiz~\citep{tang2026bad} further enriches
prompt variants to improve robustness.  In this work, we adopt the three shared task
types as criteria: repetition (whether models can correctly repeat
tokens), spelling (whether models can correctly spell tokens), and
length (whether models can correctly count characters in tokens), and
focus on repairing the identified glitch tokens.

\textbf{Repair.} Existing methods intervene internally.
GlitchProber~\citep{zhang2024glitchprober} hooks into inference to
correct intermediate activations;
GlitchEdit~\citep{tang2026bad} edits target embedding weights;
GlitchCleaner~\citep{fan2026glitchcleaner} trains and loads a gated
LoRA branch. All are effective in their settings but require access to
model internals. For providers maintaining frozen checkpoints, every
internal modification entails pipeline adjustments and potential regression risks. This paper investigates a stricter
scenario: the base parameters and the forward implementation all remain unchanged, offering providers a more
practical pathway to repair.

\section{Empirical Study on Tokenization Repair}
\label{sec:empirical}

The previous section revealed the possibility of repairing glitch
tokens externally by optimizing tokenization. The direct attempt
along this direction is to delete the BPE merge rules that produce
known glitch tokens so that the corresponding strings are encoded as
their sub-tokens instead of the glitch IDs. This section conducts
an empirical study of this approach and reports two key findings.

\textbf{Setup.} We select Llama-3.1-8B-Instruct~\citep{meta2024llama} and Qwen3.8-27B~\citep{qwen2026}, representing the
Llama BPE (128,256 entries) and Qwen BPE (248,320 entries) tokenizer
families. Both use fixed checkpoints with BF16 precision and greedy
decoding; Qwen uses text-only input with thinking disabled. We run the
three GlitchQuiz~\citep{tang2026bad} tasks (repetition, spelling,
length) on every vocabulary entry; tokens passing all three are labeled
normal, others glitch. Llama contains 22,026 glitch tokens (17.17\%)
and Qwen 29,451 (11.86\%). The full protocol is in
Appendix~\ref{app:protocol}.

\textbf{Evaluation.} For each glitch token, we reconstruct its merge tree from the BPE
execution history and define three backtracking depths. \textbf{Depth-1} deletes
only the rule producing the target token; \textbf{Depth-2} additionally deletes
the rules producing its children; \textbf{Depth-3} extends one further level.
At each depth, the deletion set is the union over all targets, applied
globally. After deletion, we preserve the remaining rule priorities and
vocabulary, re-encode all entries, and re-run detection.

\textbf{Metrics.}
We use three metrics to measure repair effectiveness, all computed at
the token level over the full vocabulary. \ding{182}~Glitch Rate (GR) is the
fraction of vocabulary tokens that still fail at least one of the three
tests after repair, reflecting the net effect of repair gains and newly
introduced errors. \ding{183}~Fix Rate (FR) is the fraction of original glitch tokens that become normal.
\ding{184}~Regression Rate (RR) is the fraction of original normal tokens that
fail at least one test after repair and thus become glitchy.

\textbf{Results.}
Figure~\ref{fig:deletion} shows FR, RR, and GR for both models at
different backtracking depths. We analyze the results and report two
findings.

\begin{figure}[t]
  \centering
  \includegraphics[width=\linewidth]{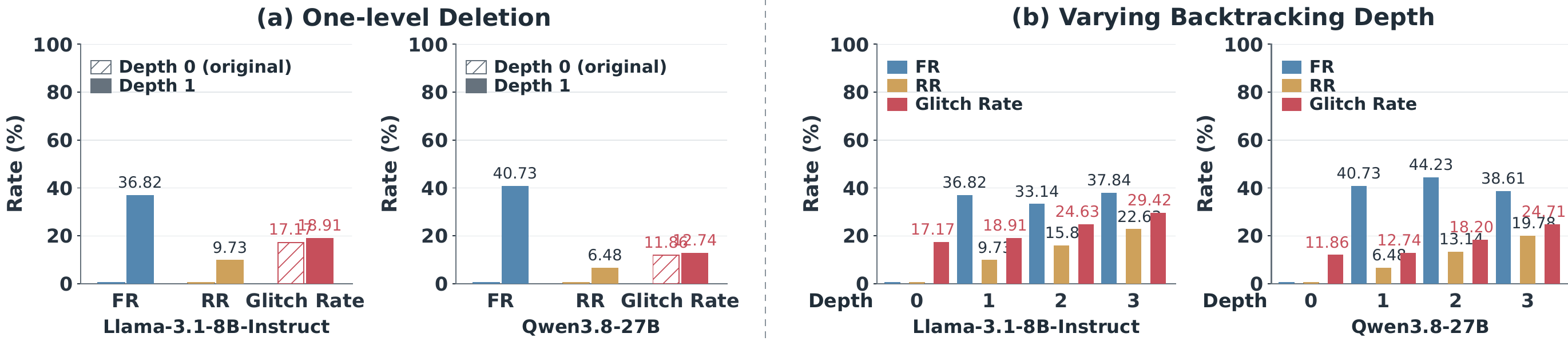}
  \caption{\textbf{Repair gains and new errors from global merge
  deletion.} (a)~FR, RR, and GR after depth-1 backtracking on two
  models. (b)~The three metrics at increasing backtracking depths.}
  \label{fig:deletion}
  \vspace{-1em}
\end{figure}

\begin{figure}[t]
  \centering
  \includegraphics[width=0.97\linewidth]{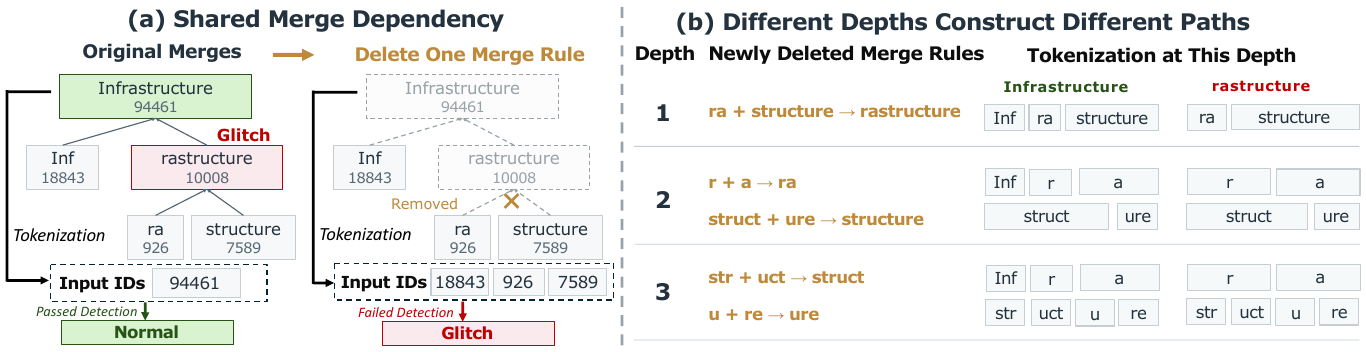}
  \vspace{0.5em}
  \caption{\textbf{Collateral effects of shared merges and path
  variation across depths on Qwen3.8-27B.} (a)~Deleting the rule for the glitch token
  ``rastructure'' blocks the construction path of the normal token
  ``Infrastructure'', changing its encoding from one token to three and
  causing detection failure. (b)~Different backtracking depths produce
  different sub-token paths for both tokens.}
  \vspace{-1em}
  \label{fig:shared}
\end{figure}

\ul{\textbf{Finding~1: Deleting glitch-token merge rules helps repair but
introduces new errors.}}  At
depth-1, FR reaches 36.82\% and 40.73\% for Llama and Qwen, showing
that changing the tokenization alone can fix a substantial fraction
of failures (Figure~\ref{fig:deletion}a). However, RR reaches 9.73\%
and 6.48\%, and GR increases (17.17\%\,$\to$\,18.91\% and
11.86\%\,$\to$\,12.74\%). The cause is that a glitch token can also
serve as an intermediate merge node for a normal longer token. As shown
in Figure~\ref{fig:shared}(a), deleting the rule for ``rastructure''
prevents ``Infrastructure'' from forming through its original path,
forcing it into three sub-tokens and causing detection failure. Global
deletion cannot distinguish a target appearing as a final input from
one used only during construction.

\ul{\textbf{Finding~2: Fix rates vary by path while collateral damage grows
with depth.}} Different
depths essentially generate different sub-token sequences for the same
glitch token (Figure~\ref{fig:shared}b), and models differ in how well
they handle each path. As shown in Figure~\ref{fig:deletion}(b),
Llama's FR drops from 36.82\% at depth-1 to 33.14\% at depth-2, then
rises to 37.84\% at depth-3; Qwen exhibits the opposite trend, peaking
at 44.23\% at depth-2 and falling to 38.61\% at depth-3. Meanwhile, RR
rises monotonically with depth for both models, and the overall GR
continues to worsen, indicating that collateral damage to normal tokens
from global rule deletion is unavoidable and intensifies with depth.

These findings show that tokenization-level repair is viable, but global rule deletion cannot solve the problem: fix rates vary across paths with no universally optimal depth, and collateral damage to normal tokens is unavoidable. An effective repair must therefore find the best replacement path for each token individually while avoiding the impact on normal tokens.
\section{\method{}}
\label{sec:method}

To address the problems in Section~\ref{sec:empirical},
we propose \textbf{\method{}}, a repair framework based on local
retokenization. \method{} has two stages
(Figure~\ref{fig:framework}): \ding{182}~the offline stage uses Behavioral
Path Optimization (BPO) to find the optimal replacement sequence for
each glitch token. Since candidates per token can be numerous and full
evaluation is expensive, BPO first ranks with a lightweight proxy score
and evaluates the top candidates; \ding{183}~the online stage applies
local retokenization, replacing only the IDs of target glitch tokens
while leaving normal tokens untouched. Details as follows.

\begin{figure}[h]
  \centering
  \includegraphics[width=\linewidth]{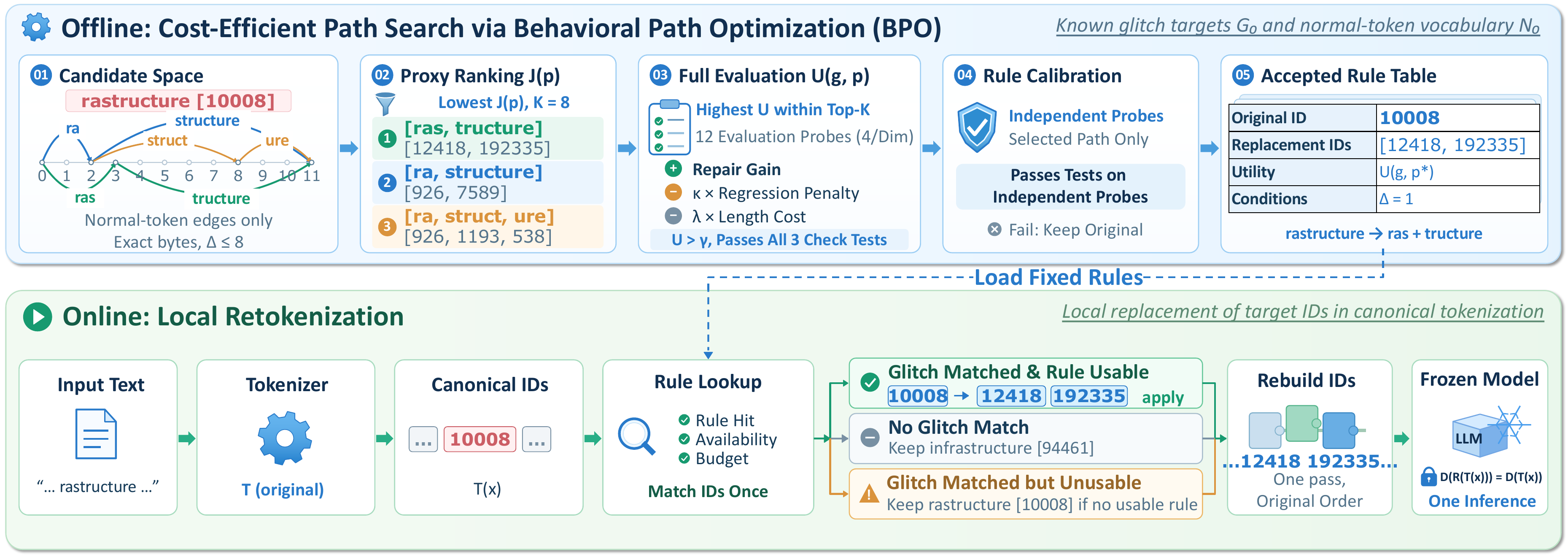}
  \caption{\textbf{The two-stage \method{} framework.} The offline
  stage uses BPO to search for the behaviorally optimal replacement
  sequence for each glitch token and compiles validated results
  into a rule table. The online stage performs budget-constrained local
  ID substitution on the canonical tokenization output.}
  \label{fig:framework}
\end{figure}

\subsection{Offline Path Selection}
\label{sec:offline}

Finding~2 shows that repair effectiveness varies by path and model,
with no universally optimal strategy. Meanwhile, candidate paths per
glitch token can be numerous, making exhaustive evaluation
impractical. This subsection formalizes path selection as a constrained
optimization problem and presents BPO to find the utility-maximizing
replacement at manageable cost.

\subsubsection{Problem Formulation}
\label{sec:formulation}

Let $T$ be the original tokenizer, $R$ the patch transform, and $x$ an
input text. The patch must satisfy a decoding equivalence constraint:
\begin{equation}
  D(R(T(x))) = D(T(x)),
  \label{eq:equivalence}
\end{equation}
ensuring that token ID sequences before and after replacement decode to
the same bytes.

\textbf{Candidate Space.}
For a glitch token $g$ with byte sequence $b$, we treat each byte
position as a node. If a normal token covers exactly $b[i\!:\!j]$, an
edge links positions $i$ and $j$. All start-to-end paths form
$\mathcal{P}(g)$; every sub-token must belong to the normal set $N_0$.

\textbf{Optimization Objective.}
We construct $m$ evaluation probes (four prompt variants per dimension:
repetition, spelling, length). Let $m_+$ be probes passed under the
original representation, $N_{\mathrm{fix}}(p)$ failed probes now
passing, $N_{\mathrm{reg}}(p)$ passed probes now failing, and
$\Delta(p)=|p|-1$ added tokens. The replacement utility is:
\begin{equation}
  U(g, p) =
  \underbrace{\frac{N_{\mathrm{fix}}(p)}{m}}_{\text{repair gain}}
  - \underbrace{\kappa \cdot
    \frac{N_{\mathrm{reg}}(p)}{m_+}}_{\text{regression penalty}}
  - \underbrace{\lambda \cdot
    \frac{\Delta(p)}{B}}_{\text{length cost}}.
  \label{eq:utility}
\end{equation}
Here $\kappa>1$ makes breaking a previously passed probe more costly
than fixing a failed one; $B$ is the length budget and $\lambda$
controls length sensitivity. Eq.~\eqref{eq:utility} is evaluated only
when $m_+>0$; otherwise the original representation is retained. The
offline goal is:
\begin{equation}
p^\star = \underset{p \in \mathcal{P}(g),\;
  U(g,p) > \gamma}{\arg\max}\; U(g,p)
  \quad \text{s.t.}\; D(p) = D(g),\;
  \Delta(p) \leq B,
  \label{eq:objective}
\end{equation}
where $\gamma$ ensures only candidates with positive net benefit are
accepted. When no candidate qualifies, the original representation is
retained.

\subsubsection{Behavioral Path Optimization (BPO)}
\label{sec:bpo}

BPO solves Eq.~\eqref{eq:objective} via a four-stage pipeline:
construct the replacement space, rank candidates with a proxy score,
perform full evaluation on the top-$K$ paths, and confirm rules through
independent calibration. Pseudocode is in
Appendix~\ref{app:bpo} (Algorithm~\ref{alg:bpo}) and the example in Figure~\ref{fig:bpo-paths}.

\textbf{Candidate Construction.}
BPO constructs $\mathcal{P}(g)$ as defined in
Section~\ref{sec:formulation}. The vocabulary contributes tokens beyond
the original merge tree, covering all valid equivalent paths. When no
valid path exists, the original representation is retained. Graph-search
properties are in Appendix~\ref{app:search}.

\textbf{Proxy Ranking.}
Computing $U(g,p)$ requires heavy model calls, so BPO uses a proxy score to
rank candidates and sends only the top-$K$ to full evaluation. Because
all components in $N_0$ pass the three base classification tests, those
binary labels alone cannot distinguish quality. We estimate empirical
failure rates $\hat{f}(v)$ and $\hat{f}(u,v)$ using three supplemental
probes disjoint from all other prompts. The proxy score for path
$p$=$(v_1,\ldots,v_k)$ is
\begin{equation}
  J(p) = \sum_{j=1}^{|p|} \hat{f}(v_j)
  + \eta \sum_{j=2}^{|p|} \hat{f}(v_{j-1},v_j)
  + \lambda_r \cdot \Delta(p).
  \label{eq:proxy}
\end{equation}
Lower $J(p)$ indicates a more promising candidate. We default to $\eta$=0.5, $\lambda_r$=0.02, $K$=8, and $B$=8.

\textbf{Full-sequence Evaluation.}
BPO inserts each shortlisted path into complete test inputs, using
$m=12$ selection probes to compute $U(g,p)$. A candidate must (i)~pass
all three check tests spanning three behavioral dimensions, whose
templates are distinct from the GlitchQuiz prompts used for the
reported FR/RR/GR; (ii)~satisfy $U(g,p)$>$\gamma$. We default to
$\kappa$=2, $\lambda$=0.02, and $\gamma$=0.05
(Appendix~\ref{app:parameters}).

\textbf{Rule Confirmation.}
An independent set of calibration probes performs final confirmation. A
rule is accepted only when all calibration probes pass; failure retains
the original representation without testing the next-best candidate,
preventing indirect data leakage. Each saved rule records the original
ID, replacement IDs, content, added tokens, and utility.
    
\subsection{Online Local Retokenization}
\label{sec:online}

\textbf{Overall Flow.}
As in Figure~\ref{fig:framework}, the
online stage proceeds in four steps. First, the original tokenizer
encodes the input text into canonical IDs $T(x)$. Second, the system
performs a single-pass lookup against the offline rule table, checking
each token ID for three conditions: whether it is a registered glitch
token, whether a validated replacement rule exists, and whether the
length budget permits substitution. Three cases arise: \ding{182}~if all
conditions are met, the glitch-token ID is replaced by its optimized
sub-token sequence; \ding{183}~if the token is not a glitch token, it passes through unchanged;
\ding{184}~if the token is a known glitch token but no usable rule exists or
the budget is exceeded, the original ID is retained. Third, all tokens
are reassembled in their original order to form the patched sequence.
Finally, the patched sequence is fed to the frozen model for inference,
satisfying $D(R(T(x)))=D(T(x))$. Because only identified glitch-token
IDs are ever replaced, the IDs of unmatched normal tokens are preserved by construction (Appendix~\ref{app:preservation}).

\textbf{Budget Allocation and Reconstruction.}
Each target may add at most $B$=8 tokens. When multiple targets
co-occur, all applicable rules are applied to their corresponding
positions and reconstructed in original order, with the length budget
enforced independently per target. Rule matching is performed once
against the initial canonical IDs and does not trigger recursively.
The reconstructed sequence enters the model normally; no model
parameters are changed.

\section{Experimental Setup}
\label{sec:setup}

Having described the \method{} framework, we now detail the models,
evaluation protocol, metrics, and baseline configurations used to
assess its effectiveness.

\textbf{Models and Execution.}
We evaluate on ten models spanning six BPE tokenizer families with
vocabularies from 32K to 262K: Llama-3.1-8B/70B-Instruct~\citep{meta2024llama},
Gemma-4-31B/E4B-it~\citep{google2026gemma}, Qwen3.8-27B, Qwen3.6-27B,
Qwen3.6-35B-A3B~\citep{qwen2026}, Mistral-7B-Instruct-v0.2~\citep{mistral2024},
GLM-4.7-Flash~\citep{zai2026glm}, and Deepseek-vl2~\citep{deepseek2025}
(text-only input). All use BF16, official chat templates, and greedy
decoding. Per-model configurations are in
Appendix Table~\ref{tab:vocabulary}; tokenizer adaptation details are
in Appendix~\ref{app:adapters}.

\textbf{Detection and Evaluation.}
For each model, every vocabulary token is tested on three GlitchQuiz
tasks (repetition, spelling, length). Tokens passing all three are
labeled normal; those failing any are labeled glitch. This
full-vocabulary protocol eliminates selection bias and enables precise
tracking of each token's status change. Scoring rules and sample sizes
are in Appendix~\ref{app:protocol}.

\textbf{Baselines.}
We compare against activation-based
GlitchProber~\citep{zhang2024glitchprober} and parameter-based
GlitchEdit~\citep{tang2026bad} and
GlitchCleaner~\citep{fan2026glitchcleaner}. Each retains its required
internal access; \method{}'s input-only restriction is not imposed on
baselines. All methods share the same vocabulary detection results
and classification criteria. Configurations are in
Appendix~\ref{app:baselines}.

\textbf{Repair Effectiveness.} Following Section~\ref{sec:empirical}, we evaluate repair effectiveness using three metrics: \ding{182}~Glitch Rate (GR), the fraction of vocabulary tokens that fail at least one of the three GlitchQuiz tasks; \ding{183}~Fix Rate (FR), the fraction of original glitch tokens successfully repaired; and \ding{184}~ Regression Rate (RR), the fraction of original normal tokens that become glitches after repair.

\textbf{Practicality Assessment.}
We assess practicality from these aspects: \ding{182}~Time Cost: offline build time and
online latency on $8{\times}$A100; \ding{183}~Language Understanding: relative
cross-entropy loss reduction on GPT-5.6-Terra-generated passages
containing repaired tokens; \ding{184}~General Capability: MMLU-Pro ~\citep{wang2024mmlu},
ARC-Challenge~\citep{clark2018arc}, GSM8K~\citep{cobbe2021training},
HumanEval~\citep{chen2021code},
with General Score as the arithmetic mean of four accuracies; \ding{185}~We also evaluate contextual regression and out-of-distribution
downstream performance (Appendix~\ref{app:downstream}).

\FloatBarrier
\begingroup
\setlength{\tabcolsep}{3pt}
\renewcommand{\arraystretch}{1.13}
\setlength{\textfloatsep}{9pt plus 2pt minus 2pt}
\setlength{\floatsep}{8pt plus 2pt minus 2pt}
\setlength{\intextsep}{8pt plus 2pt minus 2pt}
\setlength{\emergencystretch}{1.5em}
\section{Experimental Results}
\label{sec:results}
We first report repair effectiveness in Section~\ref{sec:repair}, followed by practicality evaluation, proxy-score analysis, repairability boundaries, and ablation studies in Sections~\ref{sec:practical}--\ref{sec:ablation}.

\subsection{Repair Effectiveness}
\label{sec:repair}

Table~\ref{tab:main-results} reports \method{}'s results on ten models.
The mean FR is 85.10\%, RR is 0.00\%, and mean GR decreases from
14.88\% to 2.27\%. The highest FR is 91.04\% (Llama-3.1-70B) and the
lowest 80.50\% (Gemma-4-E4B, largest vocabulary at 262K with 18.42\%
initial GR). The 0.00\% RR follows from two factors: untriggered
requests preserve original IDs by construction (Proposition~2), and in
full-vocabulary testing each token is evaluated in isolation. Per-model
counts are in Appendix~\ref{app:counts}.

\begin{table}[!htbp]
  \centering
  \small
  \caption{\textbf{\method{} repair results across models.}
  Full-vocabulary, token-level evaluation.
  Red subscripts show GR changes from the original model in percentage points.}
  \label{tab:main-results}
  \begin{tabularx}{\linewidth}{l||
    >{\hsize=.95\hsize\centering\arraybackslash}X|
    >{\hsize=.95\hsize\centering\arraybackslash}X|
    >{\hsize=.55\hsize\centering\arraybackslash}X||
    >{\hsize=.80\hsize\centering\arraybackslash}X|
    >{\hsize=.80\hsize\centering\arraybackslash}X|
    >{\hsize=1.95\hsize\centering\arraybackslash}X}
    \shline
    \textbf{Model} &
    \textbf{Glitch} & \textbf{Fixed} & \textbf{Reg.} &
    \textbf{FR\%}$\,\uparrow$ & \textbf{RR\%}$\,\downarrow$ &
    \textbf{GR\%}$\,\downarrow$ \\
    \hline\hline
    Llama-3.1-8B   & 22,026 & 18,722 & 0 & 85.00 & 0.00 & 2.58\textsubscript{\normalfont\color{LossRed}$-14.60$} \\
    Llama-3.1-70B  & 15,878 & 14,455 & 0 & 91.04 & 0.00 & 1.11\textsubscript{\normalfont\color{LossRed}$-11.27$} \\
    Gemma-4-31B    & 36,045 & 32,080 & 0 & 89.00 & 0.00 & 1.51\textsubscript{\normalfont\color{LossRed}$-12.24$} \\
    Gemma-4-E4B    & 48,287 & 38,873 & 0 & 80.50 & 0.00 & 3.59\textsubscript{\normalfont\color{LossRed}$-14.83$} \\
    Qwen3.8-27B    & 29,451 & 25,770 & 0 & 87.50 & 0.00 & 1.48\textsubscript{\normalfont\color{LossRed}$-10.38$} \\
    Qwen3.6-27B    & 31,040 & 26,546 & 0 & 85.52 & 0.00 & 1.81\textsubscript{\normalfont\color{LossRed}$-10.69$} \\
    Qwen3.6-35B-A3B & 35,261 & 29,438 & 0 & 83.49 & 0.00 & 2.34\textsubscript{\normalfont\color{LossRed}$-11.85$} \\
    Mistral-7B     & 6,240  & 5,083  & 0 & 81.46 & 0.00 & 3.62\textsubscript{\normalfont\color{LossRed}$-15.88$} \\
    GLM-4.7        & 20,444 & 16,963 & 0 & 82.97 & 0.00 & 2.25\textsubscript{\normalfont\color{LossRed}$-10.95$} \\
    Deepseek-vl2   & 20,426 & 17,267 & 0 & 84.53 & 0.00 & 2.44\textsubscript{\normalfont\color{LossRed}$-13.36$} \\
    \hline
    \rowcolor{gray!10}
    \textbf{Mean}  &        &        &   & \textbf{85.10} & \textbf{0.00} & \textbf{2.27}\textsubscript{\normalfont\color{LossRed}$-12.61$} \\
    \shline
  \end{tabularx}
\end{table}

Table~\ref{tab:baseline-fr} compares four methods. \method{}'s mean FR
of 85.10\% exceeds GlitchCleaner (70.73\%) by 14.37\,pp, GlitchEdit
(58.39\%) by 26.71\,pp, and GlitchProber (24.63\%) by 60.47\,pp,
achieving the highest FR on every model. All baselines require
model-internal access, whereas \method{} operates only on input IDs
with fixed hyperparameters across six tokenizer families.

\begin{table}[!htbp]
  \centering
  \small
  \caption{\textbf{Fix Rate comparison across four methods (\%\,$\uparrow$).}
  Green subscripts show \method{}'s gains over each baseline in percentage points.}
  \label{tab:baseline-fr}
  \begin{tabularx}{\linewidth}{l||
    >{\centering\arraybackslash}X|
    >{\centering\arraybackslash}X|
    >{\centering\arraybackslash}X||
    >{\centering\arraybackslash}X}
    \shline
    \textbf{Model} &
    \textbf{GlitchProber} & \textbf{GlitchEdit} &
    \textbf{GlitchCleaner} & \textbf{\method{}} \\
    \hline\hline

    Llama-3.1-8B
    & 21.29\textsubscript{\normalfont\color{GainGreen}$+63.71$}
    & 59.16\textsubscript{\normalfont\color{GainGreen}$+25.84$}
    & 68.07\textsubscript{\normalfont\color{GainGreen}$+16.93$}
    & \textbf{85.00} \\

    Llama-3.1-70B
    & 30.97\textsubscript{\normalfont\color{GainGreen}$+60.07$}
    & 65.03\textsubscript{\normalfont\color{GainGreen}$+26.01$}
    & 78.48\textsubscript{\normalfont\color{GainGreen}$+12.56$}
    & \textbf{91.04} \\

    Gemma-4-31B
    & 36.19\textsubscript{\normalfont\color{GainGreen}$+52.81$}
    & 66.37\textsubscript{\normalfont\color{GainGreen}$+22.63$}
    & 81.28\textsubscript{\normalfont\color{GainGreen}$+7.72$}
    & \textbf{89.00} \\

    Gemma-4-E4B
    & 12.35\textsubscript{\normalfont\color{GainGreen}$+68.15$}
    & 51.77\textsubscript{\normalfont\color{GainGreen}$+28.73$}
    & 59.12\textsubscript{\normalfont\color{GainGreen}$+21.38$}
    & \textbf{80.50} \\

    Qwen3.8-27B
    & 27.01\textsubscript{\normalfont\color{GainGreen}$+60.49$}
    & 60.48\textsubscript{\normalfont\color{GainGreen}$+27.02$}
    & 74.02\textsubscript{\normalfont\color{GainGreen}$+13.48$}
    & \textbf{87.50} \\

    Qwen3.6-27B
    & 30.09\textsubscript{\normalfont\color{GainGreen}$+55.43$}
    & 61.67\textsubscript{\normalfont\color{GainGreen}$+23.85$}
    & 76.06\textsubscript{\normalfont\color{GainGreen}$+9.46$}
    & \textbf{85.52} \\

    Qwen3.6-35B-A3B
    & 17.57\textsubscript{\normalfont\color{GainGreen}$+65.92$}
    & 52.82\textsubscript{\normalfont\color{GainGreen}$+30.67$}
    & 64.62\textsubscript{\normalfont\color{GainGreen}$+18.87$}
    & \textbf{83.49} \\

    Mistral-7B
    & 20.96\textsubscript{\normalfont\color{GainGreen}$+60.50$}
    & 53.04\textsubscript{\normalfont\color{GainGreen}$+28.41$}
    & 67.45\textsubscript{\normalfont\color{GainGreen}$+14.01$}
    & \textbf{81.46} \\

    GLM-4.7
    & 35.69\textsubscript{\normalfont\color{GainGreen}$+47.28$}
    & 62.83\textsubscript{\normalfont\color{GainGreen}$+20.14$}
    & 77.59\textsubscript{\normalfont\color{GainGreen}$+5.38$}
    & \textbf{82.97} \\

    Deepseek-vl2
    & 14.22\textsubscript{\normalfont\color{GainGreen}$+70.31$}
    & 50.77\textsubscript{\normalfont\color{GainGreen}$+33.76$}
    & 60.59\textsubscript{\normalfont\color{GainGreen}$+23.94$}
    & \textbf{84.53} \\

    \hline
    \rowcolor{gray!10}
    \textbf{Mean}
    & 24.63\textsubscript{\normalfont\color{GainGreen}$+60.47$}
    & 58.39\textsubscript{\normalfont\color{GainGreen}$+26.71$}
    & 70.73\textsubscript{\normalfont\color{GainGreen}$+14.37$}
    & \textbf{85.10} \\
    \shline
  \end{tabularx}
\end{table}

\FloatBarrier

\subsection{Practicality Assessment}
\label{sec:practical}

We select three representative models (Mistral-7B,
Qwen3.8-27B, and Llama-3.1-70B) and compare all baseline methods.
Figure~\ref{fig:practicality} summarizes the results, with additional details in Appendices~\ref{app:runtime}--\ref{app:loss}.

\begin{figure}[!htbp]
  \centering
  \includegraphics[width=\linewidth]{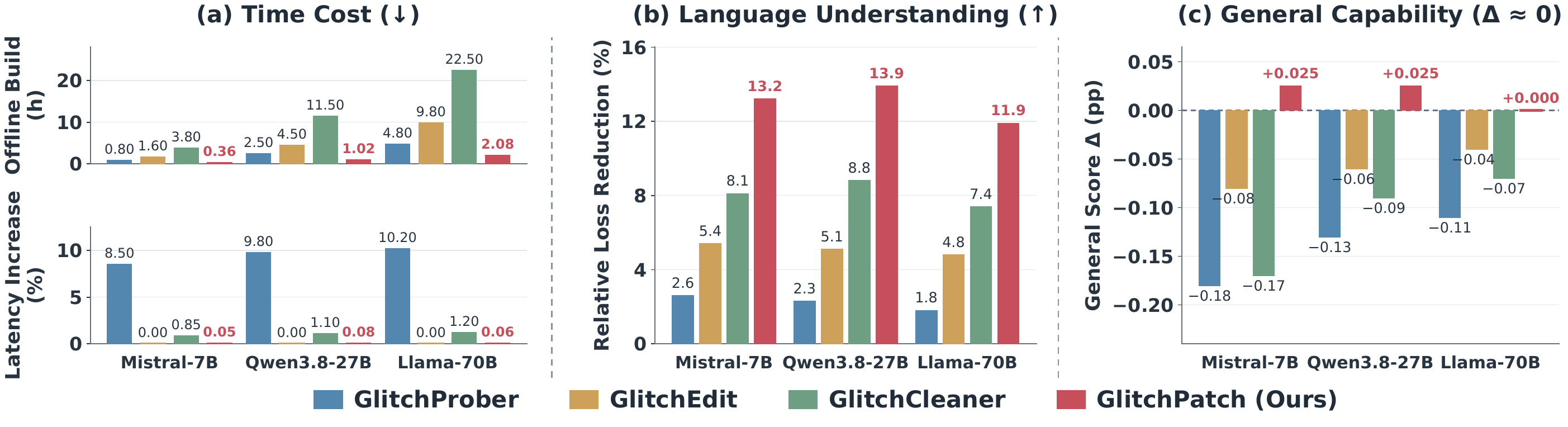}
  \caption{\textbf{Practicality comparison across three representative
  models.} (a)~Offline build time and online latency increase.
  (b)~Relative reduction in token-level loss. (c)~Change in General
  Score.}
  \label{fig:practicality}
\end{figure}

\textbf{Time Cost.}
We compare offline build time and online latency on $8{\times}$A100.
GlitchProber computes activation statistics (0.80--4.80\,h) and monitors
activations at inference, incurring 8.50--10.20\% online overhead.
GlitchEdit searches for replacement embeddings (1.60--9.80\,h) with no
online cost. GlitchCleaner trains gated LoRA parameters with the highest
offline cost (3.80--22.50\,h) and adds 0.85--1.20\% online latency.
\method{} builds in 0.36--2.08\,h and introduces rule matching and sequence reconstruction with a small
prefill cost, keeping the mean online latency
increase < 0.1\%.

\textbf{Language Understanding.}
We sample 1{,}000 repaired glitch tokens per model, generate natural
passages containing them via GPT-5.6-Terra, and compare token-level loss
before and after repair. \method{}'s relative loss reductions are
13.2\%, 13.9\%, and 11.9\% on the three models, exceeding all baselines.
Replacing glitch tokens with well-learned sub-token sequences enables
more accurate contextual prediction. The ten-model mean is 13.47\%
(Appendix~\ref{app:loss}).

\textbf{General Capability.}
On four benchmarks, \method{}'s General
Score changes range from 0.000 to $+$0.025\,pp on the three models, indicating no
degradation. All baselines show small decreases ($-$0.04 to
$-$0.18\,pp) due to internal interventions affecting non-target tokens.
Since \method{} leaves input IDs unchanged for glitch-free inputs,
capability is preserved (Appendix~\ref{app:general}).

\textbf{Contextual and Downstream Evaluation.}
Beyond token-level metrics, we examine whether repair disrupts surrounding
normal-token processing and whether its benefits transfer beyond the three
GlitchQuiz dimensions. For contextual regression, we embed repairable
glitch tokens in natural sentences and measure short-answer QA accuracy
on adjacent normal-token content; \method{} changes accuracy by at most
${\pm}$0.2\,pp across the tested models. A supplementary evaluation on
uniformly sampled $G_0$ tokens yields the same conclusion. For
out-of-distribution evaluation, we uniformly sample glitch tokens from
$G_0$  and evaluate
entity QA, verbatim copy and code-identifier completion
under all four methods. \method{} achieves the highest accuracy. Full protocols and results are reported in
Appendix~\ref{app:downstream}.

\FloatBarrier

\subsection{Proxy Score Quality and Repairability}
\label{sec:proxy}

For each target with at least two candidates in a diagnostic set, we
compute Spearman's $\rho_g$ between $-J(p)$ and $U(g,p)$. The mean
$\rho(-J,U)$ values are 0.72, 0.68, and 0.71 for Llama-8B, Qwen3.8,
and Gemma-31B, confirming that lower-$J$ candidates reliably yield
higher $U$. Increasing $K$ from 8 to 16 improves FR by only 0.43\,pp
while nearly doubling cost (Appendix~\ref{app:proxy}).

\subsection{Repairability Analysis}
\label{sec:repairability}

\par\addvspace{3pt}
\begingroup
\setlength{\intextsep}{2pt}
\begin{wraptable}{R}{.54\textwidth}
  \centering
  \small
  \caption{\textbf{Repairability analysis on Qwen3.8-27B.}}
  \label{tab:repairability}
  \begin{tabularx}{\linewidth}{@{}>{\centering\arraybackslash}X||>{\centering\arraybackslash}X|>{\centering\arraybackslash}X|>{\centering\arraybackslash}X@{}}
    \shline
    \textbf{Paths} &
    \textbf{Count} & \shortstack{\textbf{Proportion}\textbf{\%}} & \textbf{FR\%} \\
    \hline\hline
    \shortstack[l]{0}  & 1,473  & 5.00  & 0.0  \\
    1--3              & 8,847  & 30.04 & 85.4 \\
    4--10             & 11,742 & 39.87 & 93.8 \\
    11--50            & 5,849  & 19.86 & 97.1 \\
    $>$50             & 1,540  & 5.23  & 98.7 \\
    \shline
  \end{tabularx}
\end{wraptable}

Not all glitch tokens can be repaired.
Table~\ref{tab:repairability} analyzes Qwen3.8-27B's 29,451 glitch
tokens by available path count. 5.00\% have no legal equivalent path
and are structurally unrepairable (short base units or byte-level
tokens). Path count and FR correlate positively: 85.4\% for 1--3
paths, 93.8\% for 4--10, and 98.7\% for $>$50. The largest group
(4--10 paths, 39.87\%) achieves 93.8\% FR, indicating most glitch
tokens have sufficient replacement options.
\par\WFclear
\endgroup
\par\addvspace{3pt}

\subsection{Ablation Studies}
\label{sec:ablation}

We examine the effect of the key BPO parameter $K$ (number of candidate
paths retained per target) on repair effectiveness using Qwen3.8-27B.
Table~\ref{tab:ablation-k} shows FR and corresponding build cost at
different $K$ values. Build cost is estimated as $C(K)=C_{\mathrm{shared}}+N_{\mathrm{target}}\cdot K$.
\par\addvspace{3pt}
\begingroup
\setlength{\intextsep}{2pt}
\begin{wraptable}{R}{.48\textwidth}
  \centering
  \small
  \caption{\textbf{Candidate count $K$ on Qwen3.8-27B.}}
  \label{tab:ablation-k}
  \begin{tabularx}{\linewidth}{@{}>{\centering\arraybackslash}X||>{\centering\arraybackslash}X|>{\raggedleft\arraybackslash}X@{}}
    \shline
    $K$ & \textbf{FR (\%)} & \textbf{Est. build cost} \\
    \hline\hline
    1  & 71.83 & $\sim$47,000  \\
    2  & 77.46 & $\sim$76,000  \\
    4  & 82.91 & $\sim$135,000 \\
    8  & 87.50 & $\sim$253,000 \\
    16 & 87.93 & $\sim$488,000 \\
    \shline
  \end{tabularx}
\end{wraptable}

$C_{\mathrm{shared}}$ denotes fixed preparation
overhead (supplemental probe evaluations for the shared evidence pool)
and $N_{\mathrm{target}} \cdot K$ represents the total candidate
evaluations.From $K=1$ to $K=8$, FR increases from 71.83\% to
87.50\%, with successive gains of 5.63, 5.45, and 4.59\,pp. Further
increasing to $K=16$ yields only 0.43\,pp additional gain while nearly
doubling the build cost. This shows diminishing returns, so we adopt
$K=8$ as the default balancing repair effectiveness and evaluation
budget.
\par\WFclear
\endgroup
\par\addvspace{3pt}

\par\addvspace{3pt}
\begingroup
\setlength{\intextsep}{2pt}
\begin{wraptable}{R}{.59\textwidth}
  \centering
  \small
  \caption{\textbf{Component ablation on Qwen3.8-27B.}}
  \label{tab:ablation-component}
  \begin{tabularx}{\linewidth}{>{\raggedright\arraybackslash}X||c|>{\raggedright\arraybackslash}X}
    \shline
    \textbf{Setting} & \textbf{FR (\%)} & \textbf{Note} \\
    \hline\hline
    \rowcolor{gray!10}
    Full \method{}              & 87.50 & \\
    BPE merge history only      & 73.82 & No out-of-tree paths \\
    Remove pair term  & 81.73 & Proxy ignores adjacency \\
    Remove full-path evaluation & 85.34 & Proxy score only \\
    Lower utility threshold  ($\gamma=0$) & 90.23 & Lower behavioral stability \\
    \shline
  \end{tabularx}
\end{wraptable}

Table~\ref{tab:ablation-component} ablates individual components.
Restricting search to the BPE merge history reduces FR to 73.82\% ($-$13.68\,pp), showing the value of vocabulary-wide paths. Removing the pair term ($\eta{=}0$) lowers FR to 81.73\%, confirming the benefit of adjacent-token compatibility, while removing full-path verification yields 85.34\%, a further 2.16\,pp drop. Setting $\gamma{=}0$ raises FR to 90.23\% but admits marginal paths prone to regressions, so the default $\gamma{=}0.05$ trades 2.73\,pp for greater stability. Overall, all four components contribute to repair quality. Complete ablations are in Appendix~\ref{app:ablations}.
\par\WFclear
\endgroup
\par\addvspace{3pt}
\FloatBarrier
\par
\endgroup

\section{Conclusion}
\label{sec:conclusion}

This paper proposes \method{}, a glitch-token repair framework that
requires no access to model internals. By converting repair into local
ID substitution through retokenization, \method{} achieves 0.00\% regression
on normal tokens while reducing the mean glitch rate from 14.88\% to
2.27\% across ten models, outperforming all baselines with negligible
overhead. We hope this work provides a practical option for improving
tokenizer reliability.

\clearpage
\subsection*{AI Use Statement}
We used generative AI tools to assist with manuscript editing and language polishing, including improving grammar, clarity, and readability. The experimental protocol also uses GPT-5.6-Terra to generate
target-containing passages for the loss and contextual-regression
evaluations and to judge response relevance in the Entity QA evaluation. All AI-assisted revisions were reviewed and edited by the authors to preserve the intended scientific meaning and ensure consistency with the methods, experimental results, and conclusions. We take full responsibility for the final content of this work, including all text and artifacts produced with the assistance of generative AI.

\subsection*{Ethics Statement}
This work repairs anomalous tokenizer behavior in publicly released
language models without modifying model weights. All experiments use
open-weight models and publicly available benchmarks. \method{}
operates as a lightweight input-level patch and does not introduce new
training data or alter model capabilities beyond restoring intended
tokenization. We do not foresee negative societal impacts specific to
this work.

\subsection*{Reproducibility Statement}
Appendix~\ref{app:protocol} specifies the detection and scoring
protocol; Appendix~\ref{app:counts} reports complete repair counts; and
Appendices~\ref{app:properties}--\ref{app:downstream} provide
implementation details, runtime, benchmark results,
loss diagnostics, contextual regression, and downstream evaluations. 
Code and data will be released upon publication.

\bibliography{references}
\bibliographystyle{iclr2027_conference}
\clearpage
\appendix
\FloatBarrier
\raggedbottom
\begingroup
\normalsize
\captionsetup{font=small,labelfont=bf,textfont=normalfont,justification=centering,
  singlelinecheck=false,skip=5pt}
\setlength{\tabcolsep}{4pt}
\renewcommand{\arraystretch}{1.15}
\setlength{\textfloatsep}{9pt plus 2pt minus 2pt}
\setlength{\floatsep}{8pt plus 2pt minus 2pt}
\setlength{\intextsep}{8pt plus 2pt minus 2pt}
\setlength{\emergencystretch}{1.5em}
\section{Data Construction, Evaluation Protocol, and Scoring}
\label{app:protocol}
\renewcommand{\thetable}{A\arabic{table}}
\renewcommand{\theHtable}{A.\arabic{table}}
\setcounter{table}{0}

\subsection{Glitch Token Detection Protocol}

Each vocabulary token is tested on the three GlitchQuiz dimensions:
repetition, spelling, and length. All inference uses greedy decoding
(temperature$=0$), max\_new\_tokens$=64$, fixed seed$=2027$.
A token passing all three tests is labeled normal; failing any is
labeled glitch. This is a binary classification with no partial-glitch
state. The scoring rules are defined below. Each token is evaluated on the three GlitchQuiz tasks described in
Table~\ref{tab:scoring}.

\begin{table}[!htbp]
  \centering
  \small
  \caption{\textbf{The three GlitchQuiz tasks and their scoring rules.}}
  \label{tab:scoring}
  \begin{tabularx}{\linewidth}{@{}l||>{\hsize=1.05\hsize\raggedright\arraybackslash}X|>{\hsize=.8\hsize\raggedright\arraybackslash}X|>{\hsize=1.15\hsize\raggedright\arraybackslash}X@{}}
    \shline
    \textbf{Dimension} & \textbf{Instruction} &
    \textbf{Expected output} & \textbf{Scoring criterion} \\
    \hline\hline
    Repetition &
    Repeat the string: \{target\} &
    The target itself &
    Exact byte match \\
    \hline
    Spelling &
    Spell character by character: \{target\} &
    Character sequence &
    Unicode character order match \\
    \hline
    Length &
    Count the characters: \{target\} &
    Correct count &
    Extracted number matches Unicode char count \\
    \shline
  \end{tabularx}
\end{table}

\subsection{Byte-Level and Control-Token Boundaries}

For tokens containing incomplete UTF-8 sequences (e.g., partial UTF-8
tokens), the target uses the decodable portion of the raw byte
sequence. If the byte sequence cannot be decoded into a valid Unicode
string, spelling and length tasks operate at the byte level: spelling
requires byte-by-byte enumeration, and length requires the correct byte
count.

\subsection{Vocabulary Sizes and Initial Glitch Rates}

Table~\ref{tab:vocabulary} shows the vocabulary testing scale for each
model. Glitch ratios vary across models. Mistral-7B has the highest
glitch ratio (19.50\%) and Qwen3.8-27B the lowest (11.86\%). This
difference mainly stems from vocabulary size, training corpus coverage,
and tokenizer training strategy. Mistral-7B has the smallest vocabulary yet the highest glitch ratio, indicating that glitch ratio and
vocabulary size are not simply proportional. Gemma-4-E4B has the
largest vocabulary.

\begin{table}[!htbp]
  \centering
  \small
  \caption{\textbf{Vocabulary sizes and initial glitch counts per model.}}
  \label{tab:vocabulary}
  \begin{tabularx}{\linewidth}{@{}l||>{\centering\arraybackslash}X|>{\centering\arraybackslash}X|>{\centering\arraybackslash}X|>{\centering\arraybackslash}X@{}}
    \shline
    \textbf{Model} & \textbf{Vocab} & \textbf{Glitch} &
    \textbf{Glitch \%} & \textbf{Normal} \\
    \hline\hline
    Llama-3.1-8B      & 128,256 & 22,026 & 17.17 & 106,230 \\
    Llama-3.1-70B     & 128,256 & 15,878 & 12.38 & 112,378 \\
    Gemma-4-31B       & 262,144 & 36,045 & 13.75 & 226,099 \\
    Gemma-4-E4B       & 262,144 & 48,287 & 18.42 & 213,857 \\
    Qwen3.8-27B       & 248,320 & 29,451 & 11.86 & 218,869 \\
    Qwen3.6-27B       & 248,320 & 31,040 & 12.50 & 217,280 \\
    Qwen3.6-35B-A3B   & 248,320 & 35,261 & 14.20 & 213,059 \\
    Mistral-7B        & 32,000  & 6,240  & 19.50 & 25,760  \\
    GLM-4.7           & 154,880 & 20,444 & 13.20 & 134,436 \\
    Deepseek-vl2      & 129,280 & 20,426 & 15.80 & 108,854 \\
    \shline
  \end{tabularx}
\end{table}

\subsection{Decoding and Execution Settings}

All detection, selection/calibration probes, and repair evaluations use
greedy decoding (temperature$=0$, top\_$k=1$), maximum generation
length of 64 tokens, BF16 precision, and fixed seed 2027. The fixed seed
and greedy decoding jointly ensure deterministic output for identical
inputs on the same model, making token-level normal/glitch
classification fully reproducible across runs. For model-specific
settings: Llama-3.1 series, Gemma-4 series (text-only mode),
Mistral-7B, and GLM-4.7 all use their respective official chat
templates; Qwen3.8-27B and Qwen3.6 series set enable\_thinking=False for text-only mode; deepseek-vl2 also
uses text-only input mode.

\subsection{Shared evidence pool.}
The initial three-task classification assigns each token a binary
normal/glitch label, but all components in $N_0$ pass all three tests
and therefore share an identical label. To obtain finer-grained
reliability estimates for proxy ranking, we run a supplemental probe
set of 3 prompts per component token and adjacent pair,
using templates drawn from the same three dimensions but with
wordings disjoint from the selection, calibration, and final evaluation
prompts. $\hat{f}(v)$ is the fraction of supplemental probes on which
token $v$ fails; $\hat{f}(u,v)$ is the fraction on which the adjacent
pair $(u,v)$ fails. Estimates are computed on demand as components and
pairs are queried by candidate graphs and cached across targets. For
a component or pair not yet queried by any target's graph, no estimate
is needed because it does not appear in any candidate path.

\FloatBarrier
\par
\endgroup

\FloatBarrier
\raggedbottom
\begingroup
\normalsize
\captionsetup{font=small,labelfont=bf,textfont=normalfont,justification=centering,
  singlelinecheck=false,skip=5pt}
\setlength{\tabcolsep}{4pt}
\renewcommand{\arraystretch}{1.15}
\setlength{\textfloatsep}{9pt plus 2pt minus 2pt}
\setlength{\floatsep}{8pt plus 2pt minus 2pt}
\setlength{\intextsep}{8pt plus 2pt minus 2pt}
\setlength{\emergencystretch}{1.5em}
\section{Complete Repair Results}
\label{app:counts}
\renewcommand{\thetable}{B\arabic{table}}
\renewcommand{\theHtable}{B.\arabic{table}}
\setcounter{table}{0}

\subsection{Aggregate Results Across Methods}

The core design property of \method{} is that it only replaces the IDs
of known glitch tokens without touching normal-token encodings.
Therefore, \method{}'s RR is 0.00\% on all models: any token originally
classified as normal retains its original encoding after \method{} is
applied, and the model receives input IDs identical to the original.
This $\mathrm{RR}=0.00\%$ is a structural property determined by the
method design: \method{} operates on the canonical tokenization output
and only substitutes positions matching known glitch-token IDs in the
rule table. Normal-token IDs are not in the rule table, so their
encoding process and final IDs remain completely unchanged. This
differs from the baselines: GlitchProber's activation intervention may
affect subsequent token processing, GlitchEdit's embedding editing may
affect the output head under weight tying, and GlitchCleaner's gated
LoRA branch may produce small activations on non-target tokens.

\begin{table}[!htbp]
  \centering
  \small
  \caption{\textbf{Aggregate repair results across all four methods.}}
  \label{tab:aggregate}
  \begin{tabularx}{\linewidth}{l||
    >{\centering\arraybackslash}X|
    >{\centering\arraybackslash}X||
    >{\centering\arraybackslash}X|
    >{\centering\arraybackslash}X||
    >{\centering\arraybackslash}X}
    \shline
    \textbf{Method} &
    \textbf{Fixed} & \shortstack[r]{\textbf{Avg. FR}\textbf{(\%)}} &
    \textbf{Regressed} & \shortstack[r]{\textbf{Avg. RR}\textbf{(\%)}} &
    \shortstack[r]{\textbf{Avg. GR}\textbf{(\%)}} \\
    \hline\hline
    GlitchProber  & 63,614  & 24.63 & 382 & 0.024 & 11.37 \\
    GlitchEdit    & 154,382 & 58.39 & 182 & 0.012 & 6.30  \\
    GlitchCleaner & 185,940 & 70.73 & 713 & 0.046 & 4.53  \\
    \hline
    \rowcolor{gray!10}
    \textbf{\method{}} & \textbf{225,197} & \textbf{85.10} &
    \textbf{0} & \textbf{0.00} & \textbf{2.27} \\
    \shline
  \end{tabularx}
  \vspace{2pt}
\end{table}

\subsection{Per-Model Complete Counts}

Table~\ref{tab:all-counts} gives the complete per-model counts for
each method. Reg.\ and Maint.\ denote regressed and maintained counts. FR, RR, and GR are percentages.

\begingroup
\small
\setlength{\LTleft}{0pt plus 1fill}
\setlength{\LTright}{0pt plus 1fill}
\setlength{\LTcapwidth}{\linewidth}
\setlength{\tabcolsep}{3pt}
\begin{longtable}{
  >{\raggedright\arraybackslash}p{\dimexpr.20\linewidth-6pt-1.25pt\relax}||
  >{\centering\arraybackslash}p{\dimexpr.13\linewidth-6pt-1.25pt\relax}|
  >{\centering\arraybackslash}p{\dimexpr.13\linewidth-6pt-1.25pt\relax}|
  >{\centering\arraybackslash}p{\dimexpr.10\linewidth-6pt-1.25pt\relax}||
  >{\centering\arraybackslash}p{\dimexpr.07\linewidth-6pt-1.25pt\relax}|
  >{\centering\arraybackslash}p{\dimexpr.15\linewidth-6pt-1.25pt\relax}|
  >{\centering\arraybackslash}p{\dimexpr.11\linewidth-6pt-1.25pt\relax}||
  >{\centering\arraybackslash}p{\dimexpr.11\linewidth-6pt-1.25pt\relax}}

  \caption{\textbf{Per-model complete counts for all four methods.}
  Glitch and normal counts are in model heading.}
  \label{tab:all-counts}\\

  \shline
  \textbf{Method} & \textbf{Fixed} & \textbf{Unfixed} &
  \textbf{FR\%} & \textbf{Reg.} & \textbf{Maint.} &
  \textbf{RR\%} & \textbf{GR\%} \\
  \hline\hline
  \endfirsthead

  \multicolumn{8}{@{}c@{}}{\textbf{Table \thetable{} (continued)}}\\[4pt]
  \shline
  \textbf{Method} & \textbf{Fixed} & \textbf{Unfixed} &
  \textbf{FR\%} & \textbf{Reg.} & \textbf{Maint.} &
  \textbf{RR\%} & \textbf{GR\%} \\
  \hline\hline
  \endhead

  \hline\hline
  \multicolumn{8}{r@{}}{Continued on the next page.}\\
  \endfoot

  \shline
  \endlastfoot

  \multicolumn{8}{@{}l}{\textbf{Llama-8B}\quad
  $|G_0|=22,026$;\quad $|N_0|=106,230$}\\*[3pt]

  GlitchProber  & 4,689  & 17,337 & 21.29 & 32 & 106,198 & 0.03 & 13.54 \\*
  GlitchEdit    & 13,031 & 8,995  & 59.16 & 11 & 106,219 & 0.01 & 7.02 \\*
  GlitchCleaner & 14,993 & 7,033  & 68.07 & 53 & 106,177 & 0.05 & 5.52 \\*
  \rowcolor{gray!10}
  \textbf{\method{}} & \textbf{18,722} & \textbf{3,304} &
  \textbf{85.00} & \textbf{0} & \textbf{106,230} &
  \textbf{0.00} & \textbf{2.58} \\

  \hline
  \multicolumn{8}{@{}l}{\textbf{Llama-70B}\quad
  $|G_0|=15,878$;\quad $|N_0|=112,378$}\\*[3pt]

  GlitchProber  & 4,917  & 10,961 & 30.97 & 11 & 112,367 & 0.01 & 8.55 \\*
  GlitchEdit    & 10,325 & 5,553  & 65.03 & 11 & 112,367 & 0.01 & 4.34 \\*
  GlitchCleaner & 12,461 & 3,417  & 78.48 & 34 & 112,344 & 0.03 & 2.69 \\*
  \rowcolor{gray!10}
  \textbf{\method{}} & \textbf{14,455} & \textbf{1,423} &
  \textbf{91.04} & \textbf{0} & \textbf{112,378} &
  \textbf{0.00} & \textbf{1.11} \\

  \hline
  \multicolumn{8}{@{}l}{\textbf{Gemma-31B}\quad
  $|G_0|=36,045$;\quad $|N_0|=226,099$}\\*[3pt]

  GlitchProber  & 13,045 & 23,000 & 36.19 & 45 & 226,054 & 0.02 & 8.79 \\*
  GlitchEdit    & 23,923 & 12,122 & 66.37 & 23 & 226,076 & 0.01 & 4.63 \\*
  GlitchCleaner & 29,297 & 6,748  & 81.28 & 90 & 226,009 & 0.04 & 2.61 \\*
  \rowcolor{gray!10}
  \textbf{\method{}} & \textbf{32,080} & \textbf{3,965} &
  \textbf{89.00} & \textbf{0} & \textbf{226,099} &
  \textbf{0.00} & \textbf{1.51} \\

  \hline
  \multicolumn{8}{@{}l}{\textbf{Gemma-E4B}\quad
  $|G_0|=48,287$;\quad $|N_0|=213,857$}\\*[3pt]

  GlitchProber  & 5,963  & 42,324 & 12.35 & 86  & 213,771 & 0.04 & 16.18 \\*
  GlitchEdit    & 24,998 & 23,289 & 51.77 & 43  & 213,814 & 0.02 & 8.90 \\*
  GlitchCleaner & 28,547 & 19,740 & 59.12 & 128 & 213,729 & 0.06 & 7.58 \\*
  \rowcolor{gray!10}
  \textbf{\method{}} & \textbf{38,873} & \textbf{9,414} &
  \textbf{80.50} & \textbf{0} & \textbf{213,857} &
  \textbf{0.00} & \textbf{3.59} \\

  \hline
  \multicolumn{8}{@{}l}{\textbf{Qwen3.8-27B}\quad
  $|G_0|=29,451$;\quad $|N_0|=218,869$}\\*[3pt]

  GlitchProber  & 7,956  & 21,495 & 27.01 & 44 & 218,825 & 0.02 & 8.67 \\*
  GlitchEdit    & 17,813 & 11,638 & 60.48 & 22 & 218,847 & 0.01 & 4.70 \\*
  GlitchCleaner & 21,800 & 7,651  & 74.02 & 88 & 218,781 & 0.04 & 3.12 \\*
  \rowcolor{gray!10}
  \textbf{\method{}} & \textbf{25,770} & \textbf{3,681} &
  \textbf{87.50} & \textbf{0} & \textbf{218,869} &
  \textbf{0.00} & \textbf{1.48} \\

  \hline
  \multicolumn{8}{@{}l}{\textbf{Qwen3.6-27B}\quad
  $|G_0|=31,040$;\quad $|N_0|=217,280$}\\*[3pt]

  GlitchProber  & 9,340  & 21,700 & 30.09 & 43 & 217,237 & 0.02 & 8.76 \\*
  GlitchEdit    & 19,142 & 11,898 & 61.67 & 22 & 217,258 & 0.01 & 4.80 \\*
  GlitchCleaner & 23,609 & 7,431  & 76.06 & 87 & 217,193 & 0.04 & 3.03 \\*
  \rowcolor{gray!10}
  \textbf{\method{}} & \textbf{26,546} & \textbf{4,494} &
  \textbf{85.52} & \textbf{0} & \textbf{217,280} &
  \textbf{0.00} & \textbf{1.81} \\

  \hline
  \multicolumn{8}{@{}l}{\textbf{Qwen3.6-A3B}\quad
  $|G_0|=35,261$;\quad $|N_0|=213,059$}\\*[3pt]

  GlitchProber  & 6,195  & 29,066 & 17.57 & 64  & 212,995 & 0.03 & 11.73 \\*
  GlitchEdit    & 18,625 & 16,636 & 52.82 & 21  & 213,038 & 0.01 & 6.71 \\*
  GlitchCleaner & 22,786 & 12,475 & 64.62 & 107 & 212,952 & 0.05 & 5.07 \\*
  \rowcolor{gray!10}
  \textbf{\method{}} & \textbf{29,438} & \textbf{5,823} &
  \textbf{83.49} & \textbf{0} & \textbf{213,059} &
  \textbf{0.00} & \textbf{2.34} \\

  \hline
  \multicolumn{8}{@{}l}{\textbf{Mistral-7B}\quad
  $|G_0|=6,240$;\quad $|N_0|=25,760$}\\*[3pt]

  GlitchProber  & 1,308 & 4,932 & 20.96 & 8  & 25,752 & 0.03 & 15.44 \\*
  GlitchEdit    & 3,310 & 2,930 & 53.04 & 5  & 25,755 & 0.02 & 9.17 \\*
  GlitchCleaner & 4,209 & 2,031 & 67.45 & 15 & 25,745 & 0.06 & 6.39 \\*
  \rowcolor{gray!10}
  \textbf{\method{}} & \textbf{5,083} & \textbf{1,157} &
  \textbf{81.46} & \textbf{0} & \textbf{25,760} &
  \textbf{0.00} & \textbf{3.62} \\

  \hline
  \multicolumn{8}{@{}l}{\textbf{GLM-4.7}\quad
  $|G_0|=20,444$;\quad $|N_0|=134,436$}\\*[3pt]

  GlitchProber  & 7,296  & 13,148 & 35.69 & 27 & 134,409 & 0.02 & 8.51 \\*
  GlitchEdit    & 12,845 & 7,599  & 62.83 & 13 & 134,423 & 0.01 & 4.91 \\*
  GlitchCleaner & 15,862 & 4,582  & 77.59 & 67 & 134,369 & 0.05 & 3.00 \\*
  \rowcolor{gray!10}
  \textbf{\method{}} & \textbf{16,963} & \textbf{3,481} &
  \textbf{82.97} & \textbf{0} & \textbf{134,436} &
  \textbf{0.00} & \textbf{2.25} \\

  \hline
  \multicolumn{8}{@{}l}{\textbf{Deepseek-vl2}\quad
  $|G_0|=20,426$;\quad $|N_0|=108,854$}\\*[3pt]

  GlitchProber  & 2,905  & 17,521 & 14.22 & 22 & 108,832 & 0.02 & 13.57 \\*
  GlitchEdit    & 10,370 & 10,056 & 50.77 & 11 & 108,843 & 0.01 & 7.79 \\*
  GlitchCleaner & 12,376 & 8,050  & 60.59 & 44 & 108,810 & 0.04 & 6.26 \\*
  \rowcolor{gray!10}
  \textbf{\method{}} & \textbf{17,267} & \textbf{3,159} &
  \textbf{84.53} & \textbf{0} & \textbf{108,854} &
  \textbf{0.00} & \textbf{2.44} \\

\end{longtable}
\endgroup

\FloatBarrier
\par
\endgroup

\FloatBarrier
\raggedbottom
\begingroup
\normalsize
\captionsetup{font=small,labelfont=bf,textfont=normalfont,justification=centering,
  singlelinecheck=false,skip=5pt}
\setlength{\tabcolsep}{4pt}
\renewcommand{\arraystretch}{1.15}
\setlength{\textfloatsep}{9pt plus 2pt minus 2pt}
\setlength{\floatsep}{8pt plus 2pt minus 2pt}
\setlength{\intextsep}{8pt plus 2pt minus 2pt}
\setlength{\emergencystretch}{1.5em}
\section{Formal Properties and Tokenizer Adaptation}
\label{app:properties}
\renewcommand{\thetable}{C\arabic{table}}
\renewcommand{\theHtable}{C.\arabic{table}}
\setcounter{table}{0}

\subsection{Local Equivalence (Proposition 1)}
\label{app:local-equiv}
 
Let the tokenizer adapter's local decoding transition be
\begin{equation}
  \mathcal{D}(s, p) = (b, s'),
\end{equation}
where $s$ is the decoder state entering the span, $p$ is the token
sequence, $b$ is the output byte content, and $s'$ is the exit state.
A replacement $p^0 \mapsto p$ is valid when
\begin{equation}
  \mathcal{D}(s, p^0) = \mathcal{D}(s, p),
\end{equation}
and the special-token ID sequences within both spans are identical.
 
\begin{proposition}[Compatible local replacements preserve global
content]
For a set of non-overlapping replacements whose entry states match and
whose local transitions satisfy decoding equivalence, the final
sequence has the same decoded bytes, special-token ID sequence, and
decoder exit state as the original sequence.
\end{proposition}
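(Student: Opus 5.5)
The plan is to prove Proposition~1 by induction on the position in the token sequence, treating the adapter's decoder as a deterministic transducer.

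First, fix notation. Write the original sequence as a concatenation $q_0\,p^0_1\,q_1\,p^0_2\cdots p^0_r\,q_r$. Here the $p^0_i$ are the replaced spans, which are non-overlapping by hypothesis, and the $q_i$ are the untouched gaps, possibly empty. The patched sequence is then $q_0\,p_1\,q_1\cdots p_r\,q_r$. Extend $\mathcal{D}$ from single spans to arbitrary sequences by composition: if $\mathcal{D}(s,u)=(b_1,s_1)$ and $\mathcal{D}(s_1,v)=(b_2,s_2)$, then $\mathcal{D}(s,uv)=(b_1b_2,s_2)$. The first lemma I will state is exactly this compositionality. It holds because the decoder processes tokens left to right, and its output on a suffix depends only on the state reached and the remaining tokens. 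I take this as the defining property of the adapter's decoder; it covers byte-fallback buffers and the leading-space/prefix state in SentencePiece-style decoders. Special-token ID sequences also factor over concatenation, since they are just the subsequence of special IDs.

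Next comes the induction. Let $s_0$ be the initial decoder state, common to both sequences. Suppose that after processing $q_0 p_1\cdots q_{i-1}$ and $q_0 p^0_1\cdots q_{i-1}$ the two runs have emitted identical bytes and special-ID subsequences and reached the same state $t_i$. The entry state for span $i$ is then $t_i$ in both runs; this is where the hypothesis that ``entry states match'' is used, and the induction in fact justifies it. Validity of the replacement gives $\mathcal{D}(t_i,p^0_i)=\mathcal{D}(t_i,p_i)$. So both runs append the same bytes and reach the same exit state, and the special-ID subsequences within the span are identical by assumption. The following gap $q_i$ is literally the same token sequence started from the same state, so by determinism it yields the same bytes, special IDs, and next state $t_{i+1}$. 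After the last gap $q_r$, compositionality concatenates everything: the decoded bytes, the special-token ID sequence, and the final exit state all coincide. This gives $D(R(T(x)))=D(T(x))$.

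The main obstacle is making the entry-state hypothesis precise. The validity condition is stated relative to a particular $s$. A rule validated offline under one state, for example start-of-word versus mid-word, or an empty versus a nonempty pending byte buffer, need not be valid under another. I would therefore read ``compatible'' as: each rule is applied only at positions whose actual runtime entry state equals the state under which $\mathcal{D}(s,p^0)=\mathcal{D}(s,p)$ was checked, or else equivalence holds for all states $s$. The induction then shows that the runtime entry state of each span is the same in the original and patched runs. This is the point needing care, because earlier replacements could otherwise shift state. Non-overlap is essential here: it ensures that each gap and each span is processed exactly once, with a well-defined boundary state. It also fits the online stage's single, non-recursive pass, which never rewrites a region already rewritten.
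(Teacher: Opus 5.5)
Your proposal is correct and follows the same route as the paper's proof. Both process the replaced spans in their original order and induct on the fact that each valid replacement reproduces the original emitted bytes and exit state, so every later unchanged segment and span is entered from the same state. Your explicit compositionality lemma, and your requirement that validity hold at the actual runtime entry state, spell out what the paper leaves to its adapter-contract caveat and to its remark that adjacent spans which cannot be validated independently must be validated jointly or left unchanged.
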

 
\begin{proof}
Process spans in their original order. The first replacement preserves
both its emitted bytes and exit state, so the subsequent unchanged
segment receives the same entry state. Repeat the same argument for
each later replacement. Induction establishes equality of the full
output and terminal state. If adjacent spans cannot be validated
independently, validate their joint region; without a joint rule,
retain the original.
\end{proof}
 
This proposition covers implementations satisfying the adapter
contract. Additional normalization, text cleanup, or control-structure
handling must be included in the state or fixed before the
transformation, and cannot change content outside the contract.

\subsection{Preservation of Untriggered Inputs (Proposition 2)}
\label{app:preservation}
 
\begin{proposition}[Unchanged effective inputs preserve the model
conditional distribution]
If $R(T(x)) = T(x)$ and the model parameters, attention mask, position
IDs, and decoding configuration are identical, then
\begin{equation}
  P_\theta(y \mid R(T(x))) = P_\theta(y \mid T(x)).
\end{equation}
\end{proposition}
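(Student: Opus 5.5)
The plan is to treat the frozen model as a deterministic function of its full conditioning tuple and argue by substitution of equals. Concretely, I would first make explicit that the conditional distribution $P_\theta(y \mid \cdot)$ is computed by a fixed map
\[
F\bigl(\theta, \mathbf{z}, M, \pi, c\bigr),
\]
where $\mathbf{z}$ is the input ID sequence, $M$ the attention mask, $\pi$ the position IDs, and $c$ the decoding configuration (chat template, greedy/temperature settings, seed). The forward implementation is unchanged by \method{}, as stated in Section~\ref{sec:online}. Under the hypothesis $R(T(x)) = T(x)$, the two ID sequences are identical element by element, so in particular they have equal length. The remaining arguments $\theta, M, \pi, c$ are identical by assumption.

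The key steps, in order, are as follows.

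\textbf{(i) Embedding.} The embedding lookup is a function of the IDs alone. Hence equal ID sequences yield identical embedding matrices.

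\textbf{(ii) Layer-wise induction.} I would argue by induction over Transformer layers. Each layer's output depends only on the previous hidden states, $\theta$, $M$, and $\pi$ (for rotary or positional encodings). Equality therefore propagates to the final hidden states and hence to the next-token logits.

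\textbf{(iii) Autoregressive generation.} For each prefix $y_{<t}$, the conditioning inputs $(\mathbf{z}, y_{<t})$ coincide. So each factor $P_\theta(y_t \mid R(T(x)), y_{<t})$ equals $P_\theta(y_t \mid T(x), y_{<t})$, and the chain-rule product gives equality of $P_\theta(y \mid \cdot)$ for every $y$.

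\textbf{(iv) Decoding and outputs.} Under the fixed decoding configuration $c$ (greedy, fixed seed), the sampled outputs coincide as well. This recovers the "preserved by construction" claim.

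I would also connect the hypothesis to the online procedure. If no position of $T(x)$ matches a rule-table entry (case \ding{183}), or every matched glitch token falls under case \ding{184}, then $R$ emits each ID unchanged and $R(T(x)) = T(x)$. This is what makes the proposition applicable to untriggered inputs, and hence to every normal token in the full-vocabulary evaluation.

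The main obstacle is not the mathematics, which is essentially $f(a)=f(a)$. It is making the hypothesis airtight: hidden inputs must not differ between the two runs. Examples include KV-cache state, batch composition or padding affecting BF16 numerics, and nondeterministic kernels. I would handle this by folding all such quantities into the "decoding configuration and forward implementation" that the statement assumes identical, and by noting the determinism guarantee in the appendix on decoding settings. Under that reading, the claim holds as an equality of distributions rather than merely approximately.
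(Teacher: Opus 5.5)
Your proposal is correct and follows the same route as the paper: identical effective inputs fed to the same fixed model define the same conditional distribution, and under identical deterministic greedy execution the outputs coincide as well. Your embedding, layer-wise, and chain-rule steps, and your explicit folding of KV-cache state, batching numerics, and kernel nondeterminism into the hypotheses, spell out what the paper's one-line proof leaves implicit in the phrases ``identical effective inputs'' and ``identical deterministic execution.''
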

 
\begin{proof}
Both conditions send identical effective inputs to the same model,
defining the same conditional distribution. Under identical
deterministic execution, greedy outputs are also identical. 
\end{proof}
 
\paragraph{Scope of the preservation guarantee.}
Proposition~2 directly explains the $\mathrm{RR}=0.00\%$ result
observed across all ten models in full-vocabulary testing. Since
\method{}'s rule table contains only glitch-token IDs, when a normal
token appears in the input it matches no rule, so
$R(T(x)) = T(x)$ holds. The model receives input IDs identical to the
original encoding, and the conditional distribution is unchanged.  Baseline methods cannot provide the same
guarantee because their repairs act on model internals (activations,
embeddings, or adapter parameters), and internal modifications can
propagate to non-target token processing through attention mechanisms,
residual connections, or weight tying.

\subsection{Graph Search Guarantee}
\label{app:search}

Every ordinary edge in the candidate graph advances by at least one
byte; empty-content edges and control tokens are excluded. Under fixed
boundaries and finite decoder state, the graph is acyclic. Augmenting
each node with the previous token and used length makes the pairwise
term in Equation~\ref{eq:proxy} additive, enabling budgeted top-$K$
path search on the augmented graph without enumerating every complete
path.

Let $|E'|$ denote the edge count after state augmentation. A
conservative complexity bound for direct implementation is
$O(BK|E'|\log K)$ when length is tracked separately; if $E'$ already
incorporates the length dimension, the $B$ factor should not be counted
again. Actual cost depends on trie matching and the number of reachable
states. The vocabulary trie is built once and reused across targets.

The graph objective is a proposal cost, not the true
$-\log P_\theta(p \mid c)$. Full model probabilities depend on the
complete prefix and generally cannot be decomposed into fixed local
edge costs on the same graph. Full-context behavioral evaluation is
therefore necessary to assess shortlisted paths.

\subsection{Tokenizer Adapter Contract}
\label{app:adapters}

The evaluated models span two main BPE implementations: tiktoken-style
byte-level BPE (Llama, Qwen, GLM, deepseek-vl2) and SentencePiece BPE
(Gemma, Mistral). All implementations share the candidate graph
construction logic but require separate adaptation for byte sequence
extraction, decoder state management, and leading-space handling.
Tables~\ref{tab:tokenizer-families} and~\ref{tab:adapter-contract}
summarize the adapter targets and contract.

\begin{figure}[t]
  \centering
  \includegraphics[width=\linewidth]{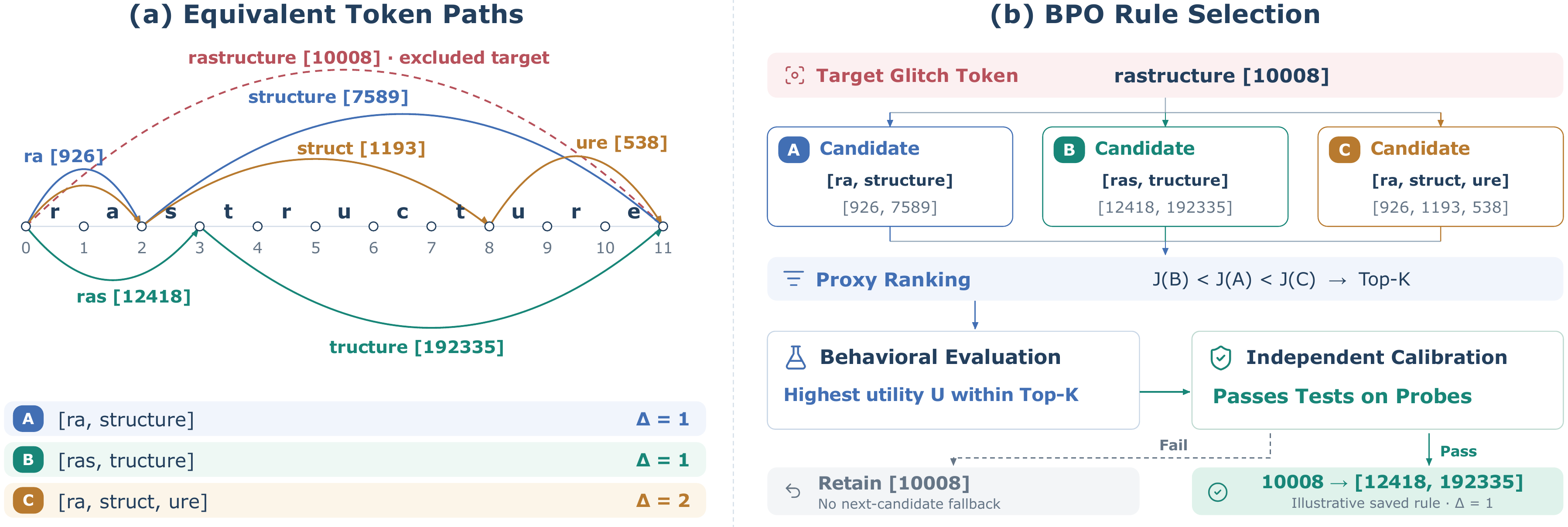}

  \captionsetup{
    format=plain,
    justification=justified,
    singlelinecheck=false,
    margin=0pt
  }
  \caption{\textbf{Candidate paths and rule selection.}
  (a)~Normal vocabulary entries define different paths over the same
  byte sequence. The original glitch ID is excluded from replacement
  edges. (b)~Proxy ranking, full-sequence evaluation, and independent
  calibration select a rule.}
  \label{fig:bpo-paths}
\end{figure}

\begin{table}[!htbp]
  \centering
  \small
  \caption{\textbf{Tokenizer adapter targets per model.}}
  \label{tab:tokenizer-families}

  \renewcommand{\arraystretch}{1.08}
  \begin{tabularx}{\linewidth}{
    @{}
    >{\raggedright\arraybackslash}X||
    >{\raggedright\arraybackslash}X|
    >{\raggedright\arraybackslash}X|
    >{\raggedright\arraybackslash}X
    @{}
  }
    \shline
    \textbf{Model} & \textbf{Type} & \textbf{Vocab} & \textbf{Adaptation} \\
    \hline\hline
    Llama-3.1-8B  & Byte-level BPE & 128,256 & Raw byte graph \\
    Llama-3.1-70B & Byte-level BPE & 128,256 & Shared with 8B \\
    Gemma-4-31B   & SentencePiece  & 262,144 & Normalizer + state \\
    Gemma-4-E4B   & SentencePiece  & 262,144 & Shared with 31B \\
    Qwen3.8-27B   & Byte-level BPE & 248,320 & Raw byte graph \\
    Qwen3.6-27B   & Byte-level BPE & 248,320 & Same as above \\
    Qwen3.6-A3B   & Byte-level BPE & 248,320 & Shared with 3.6-27B \\
    Mistral-7B    & SentencePiece  & 32,000  & Normalizer + state \\
    GLM-4.7       & Byte-level BPE & 154,880 & Raw byte graph \\
    Deepseek-vl2  & Byte-level BPE & 129,280 & Raw byte graph \\
    \shline
  \end{tabularx}
\end{table}

\begin{table}[!htbp]
  \centering
  \small
  \caption{\textbf{Tokenizer adapter contract.}}
  \label{tab:adapter-contract}
  \begin{tabularx}{\linewidth}{@{}>{\hsize=.8\hsize\raggedright\arraybackslash}X||>{\hsize=1.1\hsize\raggedright\arraybackslash}X|>{\hsize=1.1\hsize\raggedright\arraybackslash}X@{}}
    \shline
    \textbf{Case} & \textbf{Required handling} &
    \textbf{Invalid shortcut} \\
    \hline\hline
    Byte-level BPE &
    Build edges from exact raw bytes &
    Substitute visible Unicode for incomplete bytes \\
    \hline
    SentencePiece BPE &
    Preserve normalizer, leading space, byte fallback, state &
    Infer algorithm from file format alone \\
    \hline
    Partial UTF-8 &
    Expand to complete verifiable span; compare joint decoding &
    Decode to replacement character and build from it \\
    \hline
    Added/special tokens &
    Preserve IDs, order, control structure; exclude as edges &
    Reconstruct control markers via string matching \\
    \hline
    Normalization &
    Use $T(x)$'s actual byte content as reference &
    Claim preservation without checking \\
    \hline
    Incompatible/over-budget &
    Retain canonical IDs &
    Delete chars, insert spaces, or rewrite text \\
    \shline
  \end{tabularx}
\end{table}

\FloatBarrier

\subsection{BPO Pseudocode and Candidate Path Graph}
\label{app:bpo}

Algorithm~\ref{alg:bpo} summarizes the offline construction.
Figure~\ref{fig:bpo-paths} illustrates three candidate paths for
``rastructure''; the token IDs shown are tokenizer-specific.

\captionsetup[algorithm]{justification=centering,singlelinecheck=false}
\begin{algorithm}[H]
\caption{\textbf{Behavioral Path Optimization (BPO)}}
\label{alg:bpo}
\textbf{Input:} tokenizer $T$; glitch set $G_0$; normal set $N_0$;
budget $B$; shortlist size $K$; proxy parameters $(\eta, \lambda_r)$;
utility parameters $(\kappa, \lambda, \gamma)$; selection probes $P_s$;
calibration probes $P_c$.\par
\textbf{Output:} replacement rule table $\mathrm{Rules}$.\par
\medskip
\begin{enumerate}[leftmargin=1.8em,itemsep=2pt,topsep=0pt]
\item Initialize $\mathrm{Rules}\gets\varnothing$ and an empty shared
  evidence cache $\mathcal{E}$.
  \item \textbf{For each} $g \in G_0$:
\begin{enumerate}[label=\alph*.,leftmargin=1.5em,itemsep=2pt,topsep=2pt]
  \item Evaluate the original representation on $P_s$ to obtain $m_+$.
    If $m_+=0$, retain $g$ and continue to the next target.
  \item Extract byte sequence $b$ of $g$; build directed graph using
    $N_0$ tokens covering $b$.
    \item For each component token or adjacent pair required by the graph
  but absent from $\mathcal{E}$, estimate $\hat{f}(v)$ or
  $\hat{f}(u,v)$ using the supplemental probes and cache the result.
  \item Retrieve up to $K$ paths with the smallest $J(p)$ subject to
    $\Delta(p) \leq B$ and the decoder contract.
  \item Initialize $\mathrm{best} \gets \varnothing$,
    $U_{\mathrm{best}} \gets \gamma$.
  \item \textbf{For each} shortlisted $p$: compute $U(g,p)$ on $P_s$.
    If $p$ passes all three behavioral dimensions on the selection
probes $P_s$ and $U(g,p) > U_{\mathrm{best}}$, update
    $\mathrm{best} \gets p$, $U_{\mathrm{best}} \gets U(g,p)$.
  \item If $\mathrm{best}$ exists, run independent calibration on
    $P_c$.
  \item If all calibration probes pass, store the rule (original ID,
    replacement IDs, content, added length, utility). Otherwise retain
    $g$; do not test the next candidate on $P_c$.
\end{enumerate}
\item \textbf{Return} $\mathrm{Rules}$.
\end{enumerate}

\end{algorithm}

\FloatBarrier
\par
\endgroup

\FloatBarrier
\raggedbottom
\begingroup
\normalsize
\captionsetup{font=small,labelfont=bf,textfont=normalfont,justification=centering,
  singlelinecheck=false,skip=5pt}
\setlength{\tabcolsep}{4pt}
\renewcommand{\arraystretch}{1.15}
\setlength{\textfloatsep}{9pt plus 2pt minus 2pt}
\setlength{\floatsep}{8pt plus 2pt minus 2pt}
\setlength{\intextsep}{8pt plus 2pt minus 2pt}
\setlength{\emergencystretch}{1.5em}
\section{Diagnostics, Parameters, and Additional Analyses}
\label{app:diagnostics}
\renewcommand{\thetable}{D\arabic{table}}
\renewcommand{\theHtable}{D.\arabic{table}}
\setcounter{table}{0}

\subsection{Selection Probe Structure}

Table~\ref{tab:probes} describes the 12 selection probes: four prompt
variants per GlitchQuiz dimension. Each variant uses a different
question format and context structure to more thoroughly assess
replacement robustness on that dimension. The calibration stage uses
4 independent new probes (1--2 per dimension) that do not participate
in the preceding ranking or selection. All selection and calibration
probes share the same decoding budget as
Appendix~\ref{app:protocol}: greedy decoding, max\_new\_tokens$=64$.

\begin{table}[!htbp]
  \centering
  \small
  \caption{\textbf{Twelve selection probes: four variants per dimension.}
  $g$ denotes the target string.}
  \label{tab:probes}
  \begin{tabularx}{\linewidth}{@{}l|l||>{\raggedright\arraybackslash}X|>{\raggedright\arraybackslash}p{.24\linewidth}@{}}
    \shline
    \textbf{Dim.} & \textbf{Variant} &
    \textbf{Prompt structure} & \textbf{Criterion} \\
    \hline\hline
    \multirow{4}{*}{Rep.}
    & Direct      & Repeat $g$ exactly        & Exact match \\
    & Embedded    & Extract $g$ from sentence  & Exact extraction \\
    & Label bind  & Bind $g$ to label A; query & Exact match \\
    & Selection   & Select $g$ from list       & Correct selection \\
    \hline
    \multirow{4}{*}{Spell}
    & Forward     & List chars of $g$ in order & Correct sequence \\
    & Reverse     & List chars in reverse      & Correct reverse \\
    & Locate      & Give the $k$th char of $g$ & Correct char \\
    & Frequency   & Count char $c$ in $g$      & Correct count \\
    \hline
    \multirow{4}{*}{Length}
    & Direct      & Report length of $g$       & Correct length \\
    & Compare     & Compare $g$ vs. $h$        & Correct choice \\
    & Interval    & Is length in $[a,b]$?      & Correct judgment \\
    & Enum+count  & List chars then total      & Both correct \\
    \shline
  \end{tabularx}
\end{table}

\FloatBarrier

\subsection{Default Parameters}
\label{app:parameters}

Table~\ref{tab:parameters} lists all default parameters. Global
parameters are selected on a Qwen3.8-27B development set from a fixed
grid: $K \in \{1,2,4,8,16\}$, $B \in \{4,8,16\}$,
$\eta \in \{0, 0.25, 0.50, 1.00\}$, $\kappa \in \{1,2,4\}$,
$\gamma \in \{0, 0.05, 0.10\}$. The optimization objective is to
maximize net repair gain under limited sequence growth. The selected
configuration is fixed for all other models and the main evaluation.

\begin{table}[!htbp]
  \centering
  \small
  \caption{\textbf{Default \method{} parameters.}}
  \label{tab:parameters}

  \begin{tabularx}{\linewidth}{
    @{}
    >{\hsize=1.15\hsize\centering\arraybackslash}X||
    >{\hsize=0.65\hsize\centering\arraybackslash}X|
    >{\hsize=1.20\hsize\raggedright\arraybackslash}X
    @{}
  }
    \shline
    \textbf{Parameter} & \textbf{Value} & \textbf{Meaning} \\
    \hline\hline
    $K$                  & 8         & Max candidate paths per target \\
    $B$                  & 8         & Max added tokens per span \\
    Span content limit   & 64 bytes  & Retain if exceeded \\
    $\eta$               & 0.5       & Adjacent-pair proxy weight \\
    $\lambda_r$          & 0.02      & Proxy length cost \\
    $\kappa$             & 2         & Regression penalty coefficient \\
    $\lambda$            & 0.02      & Utility length cost \\
    $\gamma$             & 0.05      & Minimum selection utility \\
    Shared-pool probes   & 3/entry   & For components and pairs \\
    Selection / calib.   & 12 / 4    & Per-target contexts \\
    Probe output limit   & 64 tokens & Matching detection budget \\
    \shline
  \end{tabularx}
\end{table}

\FloatBarrier

\subsection{Baseline Adaptation Configurations}
\label{app:baselines}

Table~\ref{tab:baselines} describes the construction and adaptation
settings for each baseline. GlitchCleaner uses AdamW with effective
batch size 32, maximum sequence length 512, warmup ratio 0.03, and zero
weight decay. GlitchEdit follows the model's actual parameter sharing
when embeddings are tied, without silently untying them. \method{}'s
rule table only changes input IDs, so the model's parameter sharing
remains intact.

\begin{table}[!htbp]
  \centering
  \small
  \caption{\textbf{Baseline construction and adaptation settings.}}
  \label{tab:baselines}
  \begin{tabularx}{\linewidth}{@{}l||>{\hsize=1.1\hsize\raggedright\arraybackslash}X|>{\hsize=.9\hsize\raggedright\arraybackslash}X@{}}
    \shline
    \textbf{Method} & \textbf{Mechanism \& config} &
    \textbf{Parameter selection} \\
    \hline\hline
    GlitchProber &
    Activation statistics + neuron correction;
    2,000 ref.\ tokens; threshold 0.99; later MLP layers &
    Relative layer interval grid;
    ${\sim}[0.60L, 0.90L]$ \\
    \hline
    GlitchEdit &
    Target embedding mixed with normal neighbors;
    $\alpha \in [0,1]$, step 0.1; 1--10 neighbors &
    Behavioral acceptance; retain failure fallback \\
    \hline
    GlitchCleaner &
    Target-triggered MLP gated LoRA; rank 4,
    scaling 4, lr $3{\times}10^{-4}$ &
    Freeze base weights; independent calib.\
    from $\leq$20 epochs \\
    \shline
  \end{tabularx}
\end{table}

\FloatBarrier

\subsection{Parameter and Component Ablations}
\label{app:ablations}

The parameter sweeps in Tables~\ref{tab:ablation-gamma}--\ref{tab:ablation-budget}
use Qwen3.8-27B with $K=8$. Ablation experiments in Tables~\ref{tab:components-three} are conducted on three models using each model's full glitch set.

\par\addvspace{6pt}
\noindent\begin{minipage}{\linewidth}
\setlength{\intextsep}{2pt}
\begin{wraptable}[6]{r}{.49\textwidth}
  \centering
  \small
  \captionsetup{skip=4pt}
  \setlength{\tabcolsep}{3pt}
  \renewcommand{\arraystretch}{1.12}
  \caption{\textbf{Utility threshold $\gamma$.}}
  \label{tab:ablation-gamma}
  \begin{tabularx}{\linewidth}{@{}l||>{\centering\arraybackslash}X|>{\centering\arraybackslash}X|>{\centering\arraybackslash}X@{}}
    \shline

    $\gamma$ & 0.00 & 0.05 & 0.10 \\
    \hline\hline
    FR (\%) & 90.23 & 87.50 & 80.86 \\
    Coverage (\%) & 94.8 & 89.2 & 81.8 \\
    \shline
  \end{tabularx}
\end{wraptable}

\paragraph{Utility threshold $\gamma$.}
Table~\ref{tab:ablation-gamma} shows the effect of $\gamma$.
Here, \emph{coverage} denotes the fraction of the initial glitch set
$G_0$ for which a replacement rule is stored after candidate selection
and independent calibration, whereas FR is measured afterwards using
the disjoint final GlitchQuiz evaluation. Coverage therefore upper-bounds
FR but need not equal it: a stored rule may pass selection and
calibration yet still fail to convert the target into a normal token
under the final evaluation. Lowering $\gamma$ only relaxes the utility threshold; the three check
tests and independent calibration still apply. At $\gamma=0$, coverage
rises to 94.8\%, close to the fraction of Qwen3.8-27B
targets with at least one feasible path and $m_+>0$, and FR rises to 90.23\%.
The default $\gamma=0.05$ yields 89.2\% coverage and 87.50\% FR; the
approximately 1.7\, pp difference corresponds to targets that receive a
stored rule but do not pass the final evaluation. At $\gamma=0.10$,
the more conservative threshold reduces coverage to 81.8\% and FR to
80.86\%.
\par
\WFclear
\end{minipage}
\par\addvspace{6pt}

\par\addvspace{6pt}
\noindent\begin{minipage}{\linewidth}
\setlength{\intextsep}{2pt}
\begin{wraptable}[5]{r}{.46\textwidth}
  \centering
  \small
  \captionsetup{skip=4pt}
  \setlength{\tabcolsep}{3pt}
  \renewcommand{\arraystretch}{1.12}
  \caption{\textbf{Adjacent-pair weight $\eta$.}}
  \label{tab:ablation-eta}
  \begin{tabularx}{\linewidth}{@{}l||>{\centering\arraybackslash}X|>{\centering\arraybackslash}X|>{\centering\arraybackslash}X|>{\centering\arraybackslash}X@{}}
    \shline

    $\eta$ & 0.00 & 0.25 & 0.50 & 1.00 \\
    \hline\hline
    FR (\%) & 81.73 & 85.03 & 87.50 & 86.47 \\
    \shline
  \end{tabularx}
\end{wraptable}

\paragraph{Adjacent-pair weight $\eta$.}
Table~\ref{tab:ablation-eta} shows the effect of $\eta$. At $\eta=0$,
adjacency compatibility is ignored, yielding 81.73\% FR. The
default $\eta=0.50$ improves FR to 87.50\%, indicating that moderate
pairwise information helps distinguish more reliable replacement paths.
At $\eta=1.00$, however, excessive emphasis on pairwise risk can
down-rank paths whose sub-tokens are individually better, reducing FR
to 86.47\%. This suggests that pairwise compatibility is useful as a
complementary signal but should not dominate the path-level score.
\par
\WFclear
\end{minipage}
\par\addvspace{6pt}

\par\addvspace{6pt}
\noindent\begin{minipage}{\linewidth}
\setlength{\intextsep}{2pt}
\begin{wraptable}[6]{r}{.49\textwidth}
  \centering
  \small
  \captionsetup{skip=4pt}
  \setlength{\tabcolsep}{3pt}
  \renewcommand{\arraystretch}{1.12}
  \caption{\textbf{Span token budget $B$.}}
  \label{tab:ablation-budget}
  \begin{tabularx}{\linewidth}{@{}l||>{\centering\arraybackslash}X|>{\centering\arraybackslash}X|>{\centering\arraybackslash}X@{}}
    \shline

    $B$ & 4 & 8 & 16 \\
    \hline\hline
    FR (\%) & 82.48 & 87.50 & 88.52 \\
    Mean growth (\%) & 1.8 & 2.6 & 4.1 \\
    \shline
  \end{tabularx}
\end{wraptable}

\paragraph{Span token budget $B$.}
Table~\ref{tab:ablation-budget} shows the effect of $B$. At $B=4$,
some glitch tokens requiring finer decomposition cannot find effective
paths within budget, yielding 82.48\% FR. The difference between
$B=8$ (default) and $B=16$ is only 1.02\,pp, but the mean token
increase rises from 2.6\% to 4.1\%, adding online sequence length cost, suggesting $B=8$ provides a better balance.
\par
\WFclear
\end{minipage}
\par\addvspace{6pt}

\paragraph{The role of $\kappa$.}
$\kappa$ controls the penalty strength for probe-level behavioral
regression during path selection, which is distinct from the
vocabulary-level RR metric. Because the gain and regression terms use
different denominators, a regressed probe has $\kappa m/m_+$ times the
weight of a repaired probe when $m_+>0$. Thus, at $\kappa=1$, a
regressed probe is at least as costly as a repaired probe; at
$\kappa=2$, it is at least twice as costly, steering the
algorithm toward behaviorally stable paths. At $\kappa=4$, the stronger
penalty over-excludes candidates with positive net benefit and slightly
reduces FR.

\paragraph{Three-model component ablation.}
Table~\ref{tab:components-three} extends the component ablation to
three models. All three show consistent trends: BPE-history-only search
yields the lowest FR, removing the gate gives the highest
classification FR but weaker behavioral guarantees, and the full method
balances FR with reliability. Gemma-4-31B has slightly higher FR across
all settings, possibly because its larger vocabulary provides more
equivalent paths.

\begin{table}[!htbp]
  \centering
  \small
  \caption{\textbf{Three-model component ablation (FR\%).}}
  \label{tab:components-three}
  \begin{tabularx}{\linewidth}{@{}l||>{\centering\arraybackslash}X|>{\centering\arraybackslash}X|>{\centering\arraybackslash}X@{}}
    \shline
    \textbf{Setting} &
    \textbf{Llama-8B} & \textbf{Qwen3.8} & \textbf{Gemma-31B} \\
    \hline\hline
    Full \method{}      & 85.00 & 87.50 & 89.00 \\
    BPE history only    & 72.41 & 73.82 & 76.53 \\
    Remove pair term    & 79.27 & 81.73 & 84.18 \\
    Remove path eval.   & 82.86 & 85.34 & 87.12 \\
    Threshold $\gamma=0$ & 88.16 & 90.23 & 91.74 \\
    \shline
  \end{tabularx}
\end{table}

\FloatBarrier

\subsection{Proxy Score Quality Analysis}
\label{app:proxy}

\par\addvspace{6pt}
\noindent\begin{minipage}{\linewidth}
\setlength{\intextsep}{2pt}
\begin{wraptable}[7]{r}{.44\textwidth}
  \centering
  \small
  \captionsetup{skip=4pt}
  \setlength{\tabcolsep}{3pt}
  \renewcommand{\arraystretch}{1.12}
  \caption{\textbf{Proxy score vs.\ true utility.}}
  \label{tab:proxy}
  \begin{tabularx}{\linewidth}{@{}>{\raggedright\arraybackslash}X||c@{}}
    \shline

    \textbf{Model} & Mean $\rho(-J,U)$ \\
    \hline\hline
    Llama-3.1-8B & 0.72 \\
    Qwen3.8-27B & 0.68 \\
    Gemma-4-31B & 0.71 \\
    \shline
  \end{tabularx}
\end{wraptable}

Table~\ref{tab:proxy} reports the mean per-target Spearman correlation
$\rho(-J,U)$. For each target with at least two candidates in the
separately evaluated diagnostic set, the correlation is computed
between $-J(p)$ and $U(g,p)$; targets with fewer than two evaluated
candidates are excluded. The mean values range from 0.68 to 0.72,
indicating that lower-$J$ candidates tend to have higher utility.
Increasing $K$ from 8 to 16 improves FR by only 0.43\,pp while nearly
doubling candidate-evaluation cost, indicating diminishing returns
beyond $K=8$.
\par
\WFclear
\end{minipage}
\par\addvspace{6pt}

\FloatBarrier
\par
\endgroup

\FloatBarrier
\raggedbottom
\begingroup
\normalsize
\captionsetup{font=small,labelfont=bf,textfont=normalfont,justification=centering,
  singlelinecheck=false,skip=5pt}
\setlength{\tabcolsep}{4pt}
\renewcommand{\arraystretch}{1.15}
\setlength{\textfloatsep}{9pt plus 2pt minus 2pt}
\setlength{\floatsep}{8pt plus 2pt minus 2pt}
\setlength{\intextsep}{8pt plus 2pt minus 2pt}
\setlength{\emergencystretch}{1.5em}
\section{Runtime and Cost}
\label{app:runtime}
\renewcommand{\thetable}{E\arabic{table}}
\renewcommand{\theHtable}{E.\arabic{table}}
\setcounter{table}{0}

\subsection{Per-Model Serving Latency}

Table~\ref{tab:runtime} shows \method{}'s online latency and offline
build time for each model. \method{}'s online overhead comprises two
parts: lightweight host-side rule matching and sequence reconstruction, and the increased prefill
computation caused by the additional input tokens on triggered requests.
The language-model inference itself remains GPU-based. Since the added
tokens affect only the prefill stage and do not increase the number of
decoded output tokens, the mean end-to-end latency increase across
models is only 0.065\%, well below 0.1\%.
Offline build times range from 0.36\,h (Mistral-7B) to 2.08\,h
(Llama-3.1-70B), with a mean of 0.89\,h.

\begin{table}[!htbp]
  \centering
  \small
  \caption{\textbf{\method{} per-model serving latency and offline build time.}}
  \label{tab:runtime}
  \begin{tabularx}{\linewidth}{l||c|>{\centering\arraybackslash}X|>{\centering\arraybackslash}X|>{\centering\arraybackslash}X|>{\centering\arraybackslash}X}
    \shline
    \textbf{Model} & \textbf{TP} &
    \textbf{Base (s)} & \textbf{Incr.\ (\%)} &
    \textbf{Host Prep.\ (ms)} & \textbf{Build (h)} \\
    \hline\hline
    Llama-3.1-8B    & 1 & 1.280 & 0.071 & 0.231 & 0.61 \\
    Llama-3.1-70B   & 4 & 4.600 & 0.058 & 0.247 & 2.08 \\
    Gemma-4-31B     & 2 & 2.950 & 0.069 & 0.258 & 1.27 \\
    Gemma-4-E4B     & 1 & 1.100 & 0.061 & 0.219 & 0.49 \\
    Qwen3.8-27B     & 2 & 2.750 & 0.079 & 0.243 & 1.02 \\
    Qwen3.6-27B     & 2 & 2.800 & 0.068 & 0.238 & 0.96 \\
    Qwen3.6-A3B     & 2 & 1.850 & 0.062 & 0.214 & 0.74 \\
    Mistral-7B      & 1 & 0.950 & 0.051 & 0.187 & 0.36 \\
    GLM-4.7         & 1 & 1.500 & 0.059 & 0.221 & 0.56 \\
    Deepseek-vl2    & 2 & 2.200 & 0.072 & 0.229 & 0.84 \\
    \hline
    \rowcolor{gray!10}
    \textbf{Mean}   & -- & -- & \textbf{0.065} &
    \textbf{0.229} & \textbf{0.89} \\
    \shline
  \end{tabularx}
\end{table}

\subsection{Build Time Comparison}

Table~\ref{tab:build} compares the offline build times of all four
methods across ten models. \method{}'s average build time is 0.89\,h,
approximately 41\% of GlitchProber (2.19\,h), 21\% of GlitchEdit
(4.28\,h), and 9\% of GlitchCleaner (10.41\,h). GlitchCleaner
requires training gated LoRA parameters and has the longest build time,
reaching 22.50\,h on Llama-70B. GlitchProber's build time includes
computing activation statistics for reference tokens.

\begin{table}[!htbp]
  \centering
  \small
  \caption{\textbf{Offline build time comparison across methods (hours).}}
  \label{tab:build}
  \begin{tabularx}{\linewidth}{l||>{\centering\arraybackslash}X|>{\centering\arraybackslash}X|>{\centering\arraybackslash}X||>{\centering\arraybackslash}X}
    \shline
    \textbf{Model} &
    \textbf{Prober} & \textbf{Edit} & \textbf{Cleaner} &
    \textbf{\method{}} \\
    \hline\hline
    Llama-3.1-8B    & 0.80 & 2.80 & 7.20  & \textbf{0.61} \\
    Llama-3.1-70B   & 4.80 & 9.80 & 22.50 & \textbf{2.08} \\
    Gemma-4-31B     & 2.80 & 5.20 & 12.80 & \textbf{1.27} \\
    Gemma-4-E4B     & 2.00 & 4.00 & 9.80  & \textbf{0.49} \\
    Qwen3.8-27B     & 2.50 & 4.50 & 11.50 & \textbf{1.02} \\
    Qwen3.6-27B     & 2.40 & 4.30 & 11.00 & \textbf{0.96} \\
    Qwen3.6-A3B     & 2.00 & 3.80 & 9.20  & \textbf{0.74} \\
    Mistral-7B      & 0.80 & 1.60 & 3.80  & \textbf{0.36} \\
    GLM-4.7         & 1.60 & 3.00 & 7.50  & \textbf{0.56} \\
    Deepseek-vl2    & 2.20 & 3.80 & 8.80  & \textbf{0.84} \\
    \hline
    \rowcolor{gray!10}
    \textbf{Mean}   & 2.19 & 4.28 & 10.41 & \textbf{0.89} \\
    \shline
  \end{tabularx}
\end{table}

\subsection{Combined Cost Summary}

Table~\ref{tab:cost-summary} summarizes the offline and online costs
of all four methods. \method{} has the shortest average offline build
time (0.89\,h) while keeping the mean online latency increase below
0.1\%. It requires no additional model parameters or extra model calls.
GlitchProber has the highest online latency increase ($\sim$9.20\%)
because every inference requires activation monitoring and correction.
GlitchEdit introduces no additional serving latency after editing, but
requires a substantially longer offline build stage. GlitchCleaner has
the longest build time and introduces $\sim$1.00\% additional latency
from the adapter branch.

\begin{table}[!htbp]
  \centering
  \small
  \caption{\textbf{Combined cost summary (ten-model average).}}
  \label{tab:cost-summary}
  \begin{tabularx}{\linewidth}{l||r|>{\centering\arraybackslash}X|>{\centering\arraybackslash}X|>{\centering\arraybackslash}X}
    \shline
    \textbf{Method} &
    \textbf{Build (h)} & \textbf{Latency incr.} &
    \textbf{Extra params} & \textbf{Extra calls} \\
    \hline\hline
    GlitchProber  & 2.19  & $\sim$9.20\% & 0         & 0 \\
    GlitchEdit    & 4.28  & 0.00\%       & 0         & 0 \\
    GlitchCleaner & 10.41 & $\sim$1.00\% & Gated LoRA & 0 \\
    \hline
    \rowcolor{gray!10}
    \textbf{\method{}} & \textbf{0.89} & \textbf{$<$0.1\%} &
    \textbf{0} & \textbf{0} \\
    \shline
  \end{tabularx}
\end{table}

\FloatBarrier
\par
\endgroup

\FloatBarrier
\raggedbottom
\begingroup
\normalsize
\captionsetup{font=small,labelfont=bf,textfont=normalfont,justification=centering,
  singlelinecheck=false,skip=5pt}
\setlength{\tabcolsep}{4pt}
\renewcommand{\arraystretch}{1.15}
\setlength{\textfloatsep}{9pt plus 2pt minus 2pt}
\setlength{\floatsep}{8pt plus 2pt minus 2pt}
\setlength{\intextsep}{8pt plus 2pt minus 2pt}
\setlength{\emergencystretch}{1.5em}
\section{General Capability and Repair Boundaries}
\label{app:general}
\renewcommand{\thetable}{F\arabic{table}}
\renewcommand{\theHtable}{F.\arabic{table}}
\setcounter{table}{0}

\subsection{General Benchmarks}

Table~\ref{tab:benchmark-counts} shows the correct-answer counts of
tested models on four general benchmarks (MMLU-Pro 1,000 items,
ARC-Challenge 1,000 items, GSM8K 1,000 items, HumanEval 164 items).
\method{} and the original models use identical test samples and decoding settings. Across models and tasks,
the correct counts remain nearly unchanged, with small sample-level
fluctuations, indicating that local ID replacement has negligible
aggregate impact on general performance. Any observed
difference can arise on benchmark instances whose
canonical tokenization contains at least one rule-matched glitch-token
ID; rule-unmatched instances remain exactly unchanged by construction.

\begin{table}[!htbp]
  \centering
  \small
  \caption{\textbf{Original vs.\ \method{} correct counts on general
  benchmarks.}}
  \label{tab:benchmark-counts}
  \begin{tabularx}{\linewidth}{@{}l|l||>{\centering\arraybackslash}X|>{\centering\arraybackslash}X|>{\centering\arraybackslash}X|>{\centering\arraybackslash}X@{}}
    \shline
    \textbf{Model} & \textbf{Method} &
    \textbf{MMLU-Pro} & \textbf{ARC} & \textbf{GSM8K} &
    \textbf{HumanEval} \\
    \hline\hline
    \multirow{2}{*}{Llama-8B}
    & Original    & 540 & 790 & 805 & 86  \\
    & \method{}   & 540 & 791 & 805 & 86  \\
    \hline
    \multirow{2}{*}{Llama-70B}
    & Original    & 660 & 875 & 922 & 121 \\
    & \method{}   & 660 & 875 & 922 & 121 \\
    \hline
    \multirow{2}{*}{Gemma-31B}
    & Original    & 640 & 860 & 910 & 118 \\
    & \method{}   & 641 & 860 & 910 & 118 \\
    \hline
    \multirow{2}{*}{Gemma-E4B}
    & Original    & 480 & 740 & 760 & 72  \\
    & \method{}   & 479 & 740 & 760 & 72  \\
    \hline
    \multirow{2}{*}{Qwen3.8}
    & Original    & 630 & 850 & 900 & 115 \\
    & \method{}   & 631 & 850 & 900 & 115 \\
    \hline
    \multirow{2}{*}{Qwen3.6-27B}
    & Original    & 620 & 845 & 895 & 113 \\
    & \method{}   & 620 & 845 & 896 & 113 \\
    \hline
    \multirow{2}{*}{Qwen3.6-A3B}
    & Original    & 580 & 815 & 855 & 98  \\
    & \method{}   & 580 & 815 & 855 & 98  \\
    \hline
    \multirow{2}{*}{Mistral-7B}
    & Original    & 500 & 770 & 775 & 80  \\
    & \method{}   & 500 & 771 & 775 & 80  \\
    \hline
    \multirow{2}{*}{GLM-4.7}
    & Original    & 550 & 800 & 830 & 95  \\
    & \method{}   & 550 & 800 & 830 & 95  \\
    \hline
    \multirow{2}{*}{Deepseek-vl2}
    & Original    & 520 & 785 & 815 & 88  \\
    & \method{}   & 519 & 785 & 815 & 88  \\
    \shline
  \end{tabularx}
\end{table}
\FloatBarrier

\FloatBarrier

Table~\ref{tab:general-scores} shows the macro-averaged General Score
computed from Table~\ref{tab:benchmark-counts}. The formula is:
\begin{equation}
  \mathrm{GeneralScore} = \frac{100}{4}\left(
    \frac{c_1}{1000} + \frac{c_2}{1000} +
    \frac{c_3}{1000} + \frac{c_4}{164}\right).
\end{equation}
The ten-model $\Delta$ ranges from $-$0.025 to $+$0.025\,pp, with a
mean change of only $+$0.008\,pp. Two models show minimal decreases (both caused by a
single-item change). These small fluctuations indicate no systematic degradation of general benchmark performance.

\begin{table}[!htbp]
  \centering
  \small
  \caption{\textbf{General Score and change ($\Delta$ in pp).}}
  \label{tab:general-scores}
  \begin{tabularx}{\linewidth}{l||>{\centering\arraybackslash}X|>{\centering\arraybackslash}X|>{\centering\arraybackslash}X}
    \shline
    \textbf{Model} & \textbf{Original} & \textbf{\method{}} &
    \textbf{$\Delta$ (pp)} \\
    \hline\hline
    Llama-3.1-8B      & 66.48 & 66.51 & $+$0.025 \\
    Llama-3.1-70B     & 79.87 & 79.87 & $+$0.000 \\
    Gemma-4-31B       & 78.24 & 78.26 & $+$0.025 \\
    Gemma-4-E4B       & 60.48 & 60.45 & $-$0.025 \\
    Qwen3.8-27B       & 77.03 & 77.06 & $+$0.025 \\
    Qwen3.6-27B       & 76.23 & 76.25 & $+$0.025 \\
    Qwen3.6-35B-A3B   & 71.19 & 71.19 & $+$0.000 \\
    Mistral-7B        & 63.32 & 63.35 & $+$0.025 \\
    GLM-4.7           & 68.98 & 68.98 & $+$0.000 \\
    Deepseek-vl2      & 66.41 & 66.39 & $-$0.025 \\
    \hline
    \rowcolor{gray!10}
    \textbf{Mean}     & \textbf{70.82} & \textbf{70.83} &
    \textbf{$+$0.008} \\
    \shline
  \end{tabularx}
\end{table}

\FloatBarrier

\subsection{Failure and Retention Cases}
Table~\ref{tab:failures} lists the possible failure and retention cases
for \method{}. Taking Qwen3.8-27B as an example, of 29,451 initial
glitch tokens, 25,770 (87.50\%) are successfully repaired and 1,473
(5.00\%) are structurally unrepairable because no budget-feasible
equivalent path exists in the vocabulary. The remaining 2,208 tokens
(7.50\%) have at least one feasible path but remain unrepaired for
several reasons. First, approximately 0.1\% of the initial targets
have $m_+=0$ (all selection probes fail on the original representation),
so no utility comparison is possible and the original encoding is
retained (Algorithm~\ref{alg:bpo}, step~2a). Second, approximately
5.6\% receive no stored rule because the utility of their best candidate
does not exceed the default threshold ($0 < U(g,p) \le \gamma$).
Third, independent calibration rejected the selected candidate for
approximately 0.1\% of targets, confirming that the calibration
stage provides a meaningful guard against overfitting to the selection
probes. Fourth, approximately 1.7\% of the initial targets receive a
stored rule but do not pass the disjoint final GlitchQuiz evaluation,
accounting for the difference between the default rule coverage
(89.2\%, Table~\ref{tab:ablation-gamma}) and the final FR (87.50\%).
On the subset with feasible paths and $m_+>0$, the final FR is
92.21\%.
\begin{table}[!htbp]
  \centering
  \small
  \renewcommand{\arraystretch}{1.15}
  \caption{\textbf{Failure and retention cases for \method{}.}}
  \label{tab:failures}
  \begin{tabularx}{\linewidth}{@{}>{\hsize=1.05\hsize\raggedright\arraybackslash}X||>{\hsize=1.15\hsize\raggedright\arraybackslash}X|>{\hsize=.8\hsize\raggedright\arraybackslash}X@{}}
    \shline
    \textbf{Case} & \textbf{Observable evidence} &
    \textbf{System action} \\
    \hline\hline
    No budget-feasible path &
    Graph search: path count $=0$ &
    Retain original \\
    \hline
    All probes fail on original ($m_+=0$) &
    Selection probes: $m_+=0$ &
    Retain original \\
    \hline
    Unreliable composition &
    Full-path probe failure &
    Reject candidate \\
    \hline
    Insufficient utility &
    $U(g,p)\leq\gamma$ &
    Retain original \\
    \hline
    Calibration failure &
    Independent probe failure &
    Reject rule \\
    \hline
    Per-span limit &
    $\Delta(p)>B$ or content exceeds 64 bytes &
    Retain original \\
    \hline
    Unseen target &
    No key match &
    Pass through \\
    \hline
    Output-side failure &
    Input IDs unchanged &
    Run original model \\
    \hline
    Final-evaluation failure
    & Stored rule; final GlitchQuiz fails after substitution
    & Rule applied; counted as unfixed \\
    \shline
  \end{tabularx}
\end{table}

\FloatBarrier
\par
\endgroup

\FloatBarrier
\raggedbottom
\begingroup
\normalsize
\captionsetup{font=small,labelfont=bf,textfont=normalfont,justification=centering,
  singlelinecheck=false,skip=5pt}
\setlength{\tabcolsep}{4pt}
\renewcommand{\arraystretch}{1.15}
\setlength{\textfloatsep}{9pt plus 2pt minus 2pt}
\setlength{\floatsep}{8pt plus 2pt minus 2pt}
\setlength{\intextsep}{8pt plus 2pt minus 2pt}
\setlength{\emergencystretch}{1.5em}
\par\noindent\begin{minipage}{\linewidth}
\setlength{\intextsep}{3pt}
\section{Language-Modeling Loss Comparison}
\label{app:loss}
\renewcommand{\thetable}{G\arabic{table}}
\renewcommand{\theHtable}{G.\arabic{table}}
\setcounter{table}{0}

\begin{wraptable}[16]{r}{.63\textwidth}
  \setlength{\tabcolsep}{3pt}
  \centering
  \small
  \caption{\textbf{Relative token-level loss reduction across methods (\%).}}
  \label{tab:loss}
  \begin{tabularx}{\linewidth}{>{\raggedright\arraybackslash}X||c|c|c||c}
    \shline
    \textbf{Model} &
    \textbf{Prober} & \textbf{Edit} & \textbf{Cleaner} &
    \textbf{\method{}} \\
    \hline\hline
    Llama-3.1-8B   & 2.2 & 6.4 & 8.3  & \textbf{15.5} \\
    Llama-3.1-70B  & 1.8 & 4.8 & 7.4  & \textbf{11.9} \\
    Gemma-4-31B    & 3.2 & 5.6 & 8.8  & \textbf{12.5} \\
    Gemma-4-E4B    & 1.5 & 5.8 & 6.4  & \textbf{14.4} \\
    Qwen3.8-27B    & 2.3 & 5.1 & 8.8  & \textbf{13.9} \\
    Qwen3.6-27B    & 2.7 & 5.4 & 8.5  & \textbf{12.8} \\
    Qwen3.6-A3B   & 1.8 & 4.9 & 7.0   & \textbf{13.5} \\
    Mistral-7B     & 2.6 & 5.4 & 8.1  & \textbf{13.2} \\
    GLM-4.7        & 3.8 & 7.0 & 11.2  & \textbf{13.8} \\
    Deepseek-vl2   & 1.4 & 4.2 & 5.8  & \textbf{13.2} \\
    \hline
    \rowcolor{gray!10}
    \textbf{Mean}  & 2.33 & 5.46 & 8.03 & \textbf{13.47} \\
    \shline
  \end{tabularx}
\end{wraptable}

Table~\ref{tab:loss} shows the relative token-level loss reduction for
all four methods across ten models. The evaluation protocol is as
follows: for each model, 1,000 successfully repaired glitch tokens are
randomly sampled, and GPT-5.6-Terra generates a natural short passage
(approximately 200 tokens) containing each target. The passages are
fed to the target model using both original and repaired tokenizations,
and the relative reduction in token-level average cross-entropy loss is
computed. \method{} achieves the largest loss reduction on all ten
models (mean 13.47\%), substantially exceeding GlitchCleaner (8.03\%),
GlitchEdit (5.46\%), and GlitchProber (2.33\%). The loss reduction of
each method positively correlates with its FR: methods that repair more
glitch tokens reduce loss on more text samples.

\par
\WFclear
\end{minipage}
\par
\FloatBarrier
\endgroup

\FloatBarrier
\raggedbottom
\begingroup
\normalsize
\captionsetup{font=small,labelfont=bf,textfont=normalfont,justification=centering,
  singlelinecheck=false,skip=5pt}
\setlength{\tabcolsep}{4pt}
\renewcommand{\arraystretch}{1.15}
\setlength{\textfloatsep}{9pt plus 2pt minus 2pt}
\setlength{\floatsep}{8pt plus 2pt minus 2pt}
\setlength{\intextsep}{8pt plus 2pt minus 2pt}
\setlength{\emergencystretch}{1.5em}

\section{Contextual Regression and Downstream Evaluation}
\label{app:downstream}
\renewcommand{\thetable}{H\arabic{table}}
\renewcommand{\theHtable}{H.\arabic{table}}
\setcounter{table}{0}

This appendix evaluates two aspects not covered by the token-level
FR/RR/GR metrics: whether local ID substitution degrades surrounding
normal tokens in mixed inputs (Section~\ref{app:contextual}), and
whether repair benefits generalize to downstream tasks beyond the three
GlitchQuiz dimensions (Section~\ref{app:ood}).

\subsection{Contextual Regression Analysis}
\label{app:contextual}

\textbf{Protocol.}
For each of three representative models (Mistral-7B, Llama-3.1-70B,
Qwen3.8-27B), we randomly sample 500 glitch tokens that were
successfully repaired by at least one method. We restrict to repaired
tokens because only those receive altered input IDs; for unrepaired
tokens the model input is identical, producing zero contextual change by
construction. For each token, GPT-5.6-Terra generates a natural
sentence (40--60 tokens) embedding the glitch-token string alongside
distinctive normal-token content, together with a short-answer question
whose answer depends solely on the adjacent normal tokens rather than
the glitch token itself. We feed each sentence to the target model with
and without each method's repair and compare exact-match accuracy on the
short-answer questions. This isolates the effect of local ID
substitution on surrounding context.

\begin{table}[h]
  \centering
  \small
  \captionsetup{justification=justified, singlelinecheck=false}
  \caption{\textbf{Contextual regression: short-answer QA accuracy (\%)
  on normal tokens adjacent to repaired glitch tokens.}
  Each per-model accuracy is computed from 500 binary exact-match
  outcomes. $\Delta$ is the change in percentage points relative to the
  original (no-repair) baseline.}
  \label{tab:contextual}
  \begin{tabularx}{\linewidth}{@{\hskip 0pt}>{\raggedright\arraybackslash}X||
    c|c||c|c||c|c||c|c@{\hskip 0pt}}
    \shline
    & \multicolumn{2}{c||}{\textbf{GlitchProber}}
    & \multicolumn{2}{c||}{\textbf{GlitchEdit}}
    & \multicolumn{2}{c||}{\textbf{GlitchCleaner}}
    & \multicolumn{2}{c}{\textbf{\method{}}} \\
    \textbf{Model}
    & Acc. & $\Delta$
    & Acc. & $\Delta$
    & Acc. & $\Delta$
    & Acc. & $\Delta$ \\
    \hline\hline

    Mistral-7B \hfill\scriptsize(orig.\ 87.4)
      & 86.8 & $-$0.6
      & 87.2 & $-$0.2
      & 87.0 & $-$0.4
      & 87.2 & $-$0.2 \\

    Llama-3.1-70B \hfill\scriptsize(orig.\ 91.6)
      & 90.8 & $-$0.8
      & 91.2 & $-$0.4
      & 91.0 & $-$0.6
      & 91.8 & $+$0.2 \\

    Qwen3.8-27B \hfill\scriptsize(orig.\ 89.2)
      & 88.4 & $-$0.8
      & 88.8 & $-$0.4
      & 88.8 & $-$0.4
      & 89.0 & $-$0.2 \\

    \hline
    \rowcolor{gray!10}
    \textbf{Mean} \hfill\scriptsize(orig.\ 89.4)
      & 88.7 & $-$0.7
      & 89.1 & $-$0.3
      & 88.9 & $-$0.5
      & 89.3 & $-$0.1 \\
    \shline
  \end{tabularx}
\end{table}

Table~\ref{tab:contextual} shows that \method{}'s accuracy changes are
within ${\pm}$0.2\,pp on every model (mean $-$0.1\,pp), confirming that
local ID substitution does not meaningfully degrade the processing of
surrounding normal tokens. Among baselines, GlitchProber shows the
largest per-token regression (mean $-$0.7\,pp) because activation-layer
intervention at the target position can propagate through attention to
adjacent representations. GlitchEdit and GlitchCleaner show moderate
changes as their embedding or adapter
modifications alter the entire forward pass. \method{}'s advantage stems
from its locality: only the input IDs at the target position change, and
the self-attention mechanism naturally re-weights contributions from the
replacement sub-tokens, which are individually well-learned by the
model. As a supplementary check, we also evaluate on 500 uniformly
sampled $G_0$ tokens; per-method
mean $\Delta$ values remain within $\pm$0.1\,pp of zero across all
methods, confirming that no approach introduces measurable contextual
harm at the population level.

\subsection{Out-of-Distribution Downstream Evaluation}
\label{app:ood}

\textbf{Tasks.}
We design three downstream tasks that test repair effectiveness beyond
GlitchQuiz's repetition, spelling, and length dimensions:
\ding{182}~Entity QA: given a question about an entity whose name
contains a glitch token,
assess whether the model produces a factually relevant answer; scored by
GPT-5.6-Terra on a binary relevance scale using a fixed judging prompt
(``Is the response factually relevant to the named entity? Answer Yes or
No.''), with a 50-sample human validation subset showing 94\%
judge--human agreement;
\ding{183}~Verbatim Copy: the model is asked to repeat a full sentence
(15--25 tokens) containing the glitch token exactly; scored by exact
string match after whitespace normalization. Unlike GlitchQuiz's
repetition task, which tests isolated single-token reproduction,
verbatim copy embeds the target in a natural sentence context and
requires faithful reproduction of the entire span;
\ding{184}~Code Identifier: a function stub uses a glitch-token string
as a variable or function name; the model completes the function body;
scored by whether the completion correctly references the identifier.

\textbf{Sampling protocol.}
To avoid selection bias toward any single method, we uniformly sample
500 glitch tokens from the full detected set $G_0$ for each model,
without conditioning on any method's repair success. Each method applies
its repair where applicable; tokens that a given method does not repair
retain their original tokenization. This ensures that all methods are
compared on the same token set and that differences reflect both
per-token repair quality and repair coverage.

\begin{table}[h]
  \centering
  \small
  \captionsetup{justification=justified, singlelinecheck=false}
  \caption{\textbf{Out-of-distribution downstream task accuracy (\%) on
  inputs containing glitch tokens.} 500 glitch tokens uniformly sampled
  from $G_0$ per model. Best result per row in bold.}
  \label{tab:ood}
  \newcolumntype{C}{>{\centering\arraybackslash\hsize=0.9\hsize}X}
  \newcolumntype{W}{>{\centering\arraybackslash\hsize=1.6\hsize}X}
  \begin{tabularx}{\linewidth}{C|W||C|C|C|C||C}
    \shline
    \textbf{Task} & \textbf{Model}
    & \textbf{Orig.}
    & \textbf{Prober}
    & \textbf{Edit}
    & \textbf{Cleaner}
    & \textbf{Ours} \\
    \hline\hline
    \multirow{3}{*}{Entity QA}
    & Mistral-7B   & 41.8 & 43.2 & 49.6 & 56.4 & \textbf{67.8} \\
    & Llama-70B    & 48.2 & 50.8 & 57.0 & 63.8 & \textbf{75.0} \\
    & Qwen3.8-27B  & 44.6 & 46.6 & 52.8 & 59.2 & \textbf{71.6} \\
    \hline
    \multirow{3}{*}{Verb.\ Copy}
    & Mistral-7B   & 38.2 & 39.8 & 45.4 & 52.8 & \textbf{64.6} \\
    & Llama-70B    & 43.6 & 45.6 & 52.0 & 59.6 & \textbf{72.4} \\
    & Qwen3.8-27B  & 40.8 & 42.4 & 48.2 & 55.0 & \textbf{68.2} \\
    \hline
    \multirow{3}{*}{Code Ident.}
    & Mistral-7B   & 50.6 & 51.8 & 55.0 & 59.4 & \textbf{66.8} \\
    & Llama-70B    & 57.8 & 59.4 & 63.2 & 67.0 & \textbf{74.0} \\
    & Qwen3.8-27B  & 53.4 & 54.8 & 58.4 & 62.6 & \textbf{70.0} \\
    \hline\hline
    \rowcolor{gray!10}
    \multicolumn{2}{l||}{\textbf{Overall mean}}
    & 46.6 & 48.3 & 53.5 & 59.5 & \textbf{70.0} \\
    \shline
  \end{tabularx}
\end{table}

\textbf{Results.}
Table~\ref{tab:ood} shows consistent improvements across all three tasks
and models. \method{}'s overall mean accuracy is 70.0\%, compared with
59.5\% for GlitchCleaner, 53.5\% for GlitchEdit, and 48.3\% for
GlitchProber. Entity QA and verbatim copy show the largest absolute
gains (mean $+$26.6 and $+$27.5\,pp over the original), confirming that
repair restores the model's ability to recognize and reproduce
previously opaque tokens; code-identifier completion also benefits
substantially (mean $+$16.3\,pp), testing a capability outside
GlitchQuiz's three dimensions. The ranking across methods mirrors the FR
ordering, with the gap driven by both per-token repair quality and
repair coverage: on the sampled sets, \method{} applies rules to
84--87\% of the tokens while GlitchCleaner covers 70--74\%, GlitchEdit
50--55\%, and GlitchProber 35--40\%. Because all methods are evaluated
on the same uniformly sampled token set, these differences are not
attributable to selection bias.

\par
\FloatBarrier
\endgroup
\end{document}